\renewcommand{\P}{\mathbb{P}}
\newcommand{\sA}{\mathcal{A}}
\newcommand{\sM}{\mathcal{M}}
\newcommand{\sD}{\mathcal{D}}
\newcommand{\sF}{\mathcal{F}}
\newcommand{\sZ}{\mathcal{Z}}
\newcommand{\ind}[1]{\mathbb{I}\left\{ #1 \right\}}
\newcommand{\field}[1]{\mathbb{#1}}
\newcommand{\R}{\field{R}}
\newcommand{\Var}{\mathrm{Var}}
\newcommand{\scO}{\mathcal{O}}
\newcommand{\wh}{\hat} 
\newcommand{\ve}{\varepsilon}
\newcommand{\spin}{\{-1,+1\}}
\newtheorem{lemma}{Lemma}
\newtheorem{theorem}{Theorem}
\newtheorem{cor}{Corollary}
\newtheorem{prop}{Proposition}
\newcommand{\reals}{\mathbb{R}}
\newcommand{\Srepk}{{S^{(k)}}}
\newcommand{\repj}{^{(j)}}
\newcommand{\repk}{^{(k)}}
\newcommand{\repkj}{^{(k,j)}}
\DeclareMathOperator{\E}{\mathbb{E}}
\newcommand*\diff{\mathop{}\!\mathrm{d}}
\newcommand{\KL}{\mathrm{KL}}
\newcommand{\pr}[1]{\left( #1 \right)}
\newcommand{\br}[1]{\left[ #1 \right]}
\newcommand{\cbr}[1]{\left\{ #1 \right\}}
\newcommand{\abs}[1]{\left|#1\right|}
\newcommand{\wt}{\widetilde}
\newcommand{\Vtilrep}{V}
\newcommand{\delk}{^{\backslash k}}
\newcommand{\fwa}{f^{\text{\scshape{wa}}}}
\newcommand{\Vwa}{V^{\text{\scshape{wa}}}}
\newcommand{\WIS}{^{\text{\scshape{wis}}}}
\newcommand{\conf}{x}
\newcommand{\param}{{\theta}}
\newcommand{\ps}{\wh{p}_S}
\newcommand{\leqC}{\stackrel{\scO}{=}}
\newcommand{\leqCln}{\stackrel{\wt{\scO}}{=}}
\newcommand{\VES}{V^{\text{\scshape{es}}}}
\title{Efron-Stein PAC-Bayesian Inequalities}
\author{Ilja Kuzborskij}
\author{Csaba Szepesv\'ari}
\affil{DeepMind}
\date{}
\begin{document}
\maketitle
\begin{abstract}
We prove semi-empirical concentration inequalities for random variables which are given as possibly nonlinear functions of independent random variables.
These inequalities describe concentration of random variable in terms of the data/distribution-dependent \ac{ES} estimate of its variance and they do not require any additional assumptions on the moments.
In particular, this allows us to state semi-empirical Bernstein type inequalities for general functions of \emph{unbounded} random variables, which gives user-friendly concentration bounds for cases where related methods (e.g.\ bounded differences) might be more challenging to apply.
We extend these results to \emph{Efron-Stein PAC-Bayesian inequalities} which hold for arbitrary probability kernels that define a random, data-dependent choice of the function of interest.
Finally, we demonstrate a number of applications, including PAC-Bayesian generalization bounds for unbounded loss functions, empirical Bernstein type generalization bounds, new \emph{truncation-free} bounds for \emph{off-policy} evaluation with \ac{WIS}, and off-policy PAC-Bayesian learning with \ac{WIS}.
\end{abstract}
\section{Introduction}
In the following we will be concerned with bounds on the upper tail probability of
\[
  \Delta = f(S) - \E[f(S)]~,
\]
where $S = \pr{X_1, X_2, \ldots, X_n}$ composed from independent random elements is distributed according to some probability measure $\sD \in \sM_1(\sZ)$\footnote{We use notation $\sM_1(\sA)$ to denote a family of probability measures supported on a set $\sA$.}, and $\sZ = \sZ_1 \times \dots \times \sZ_n$ is some space and  $f : \sZ \to \reals$ is either a fixed measurable function, or is a function that is randomly chosen as a function of $S$.

We first consider a simpler case of such \emph{concentration inequalities}, when $f$ is a fixed function and
the user can choose from a number of different ways to study behavior of $\Delta$ (see~\citep{boucheron2013concentration} for a comprehensive survey on the topic).
Perhaps the most popular two methods used in learning theory are the martingale method~\citep{azuma1967weighted,mcdiarmid1998concentration} 
and the information-theoretic entropy method~\citep{boucheron2003concentration,maurer2019bernstein}.
Both of these give many well-known and useful inequalities:
the first family includes the celebrated Azuma-Hoeffding and so-called \emph{bounded-differences} inequalities popularized by~\cite{mcdiarmid1998concentration}, while the second family is mostly known for powerful exponential Efron-Stein inequality which allows to state many prominent concentration inequalities as its special case (for instance, inequalities for self-bounding functions and Talagrand's convex distance inequality).

Roughly speaking, a common feature of both families is that
they relate concentration of $\Delta$ around zero to 
the sensitivity of $f$ to coordinatewise perturbations, 
expressed through the \emph{\acf{ES} variance proxy}
\begin{equation}
  \label{eq:VES}
  \VES = \E\br{\sum_{k=1}^n (f(S) - f(S\repk))_+^2 \,\middle|\, S}\,,
\end{equation}
where $(s)_+ = \max\cbr{0,s}$ and notation $S\repk$ means that the $k$th element of $S$ is replaced by $X_k'$, where $S'=\pr{X_1', X_2', \ldots, X_n'}$ is an independent copy of $S$.

For example,
an inequality closely related to
the well-known bounded-differences (or McDiarmid's) inequality follows from a conservative upper bound on~\eqref{eq:VES}: assuming that $\VES \leq c$ almost surely for some positive constant $c$, one has
\[
  \P\pr{\abs{\Delta} \leq \sqrt{2 c x}} \geq 1 - e^{-x}\,, \qquad x \geq 0~.
\]
This inequality is of course rather pessimistic since it neglects information about moments of $\Delta$.
A tangible step forward in proving less conservative inequalities was done in the context of so-called entropy method.
In particular, one of the central achievements of the entropy method is the following `exponential \ac{ES} inequality':\footnote{Recall that the concentration inequality follows from the Chernoff bound, i.e. $\P(\Delta \geq t) \leq \inf_{\lambda \in (0,1)} \E[e^{\lambda \Delta - \lambda t}].$}
\begin{equation}
  \label{eq:exp_ES}
  \ln \E\br{e^{\lambda \Delta}} \leq \frac{\lambda}{1 - \lambda} \ln \E\br{e^{\lambda \VES}}\,, \qquad \lambda \in (0, 1)~.
\end{equation}
This inequality bounds the \ac{MGF} of $\Delta$ through the \ac{MGF} of its \ac{ES} variance proxy, implying that by controlling the latter, one can obtain tail bounds involving moments.
For instance,
if for any choice of the distribution $\sD$, $f$ satisfies
$\VES \leq a f(S) + b$ for some constants $a,b > 0$, it is called a \emph{weakly self-bounding function} and
we can employ \cref{eq:exp_ES} to show that the first moment of $f$ and constants $a,b$ control the tail behavior of $\Delta$:
\begin{equation}
  \label{eq:self_bounding}
  \P\pr{\Delta \leq 2 \sqrt{\pr{a \E[f(S)] + b} x} + 2 a x} \geq 1 - e^{-x}\,, \qquad x \geq 0~.
\end{equation}
Thus, whenever $a$ is decreasing in $n$ (for instance, when $f$ is an average of random variables), one gets a dominating lower-order term when the first moment is small enough.
This behavior is reminiscent of the classical Bernstein's inequality and proved to be useful in a number of applications, such as generalization bounds with localization~\citep{bartlett2002localized,srebro2010smoothness,catoni2007pacbayesian} and empirical Bernstein-type inequalities~\citep{maurer2009empirical,tolstikhin2013pac}.

It is natural to ask whether we can get similar inequalities with higher order moments.
Indeed, a recent line of work by~\cite{maurer2019bernstein,maurer2018empirical} introduced
Bernstein-type concentration inequalities for general functions where in place of the variance, one has an expected \ac{ES} variance proxy (note that one still controls the variance of $f$ indirectly thanks to the \acl{ES} inequality $\Var(f) \leq \E[\VES]$).
However, this comes at a cost of controlling the first and the second moments of $\VES$.
Formally, if for any choice of distribution $\sD$, $f$ satisfies\footnote{  
  Subscript $-k$ in $\E_{-k}[\cdot]$ stands for conditioning on everything except for $X_k$.\\
  Notation $s^{(k)}$ stands for replacement of $k$th element of $s$ with the $k$th element of $s'$ and $(k,j)$ stands for replacement of both $k$th and $j$th elements with their counterparts in $s'$.}
\begin{align*}
\max_{k \in [n]}f(S) - \E_{-k}[f(S)] \leq b~,\
\sup_{s, s' \in \sZ} \sum_{k,j:k \neq j} \pr{ \big(f(s) - f(s\repk)\big) - \big(f(s\repj) - f(s\repkj) \big) }^2 \leq a^2
\end{align*}
almost surely for some $a, b > 0$,
we have a tail bound
\begin{equation}
  \label{eq:maurer_es}
  \P\pr{ \Delta \leq \sqrt{2 \E\br{\VES} x} + \big(\sqrt{2} a + \tfrac{2}{3} b\big) x } \geq 1 - e^{-x} \qquad x \geq 0~.
\end{equation}
Thus, to have a Bernstein-type behavior of the bound, $a$ and $b$ should be of a lower order.
Note the connection of \cref{eq:maurer_es} to the exponential \ac{ES} inequality~\ref{eq:exp_ES}: the condition on $f$ outlined above is sufficient to control the second and higher order moments of $\VES$, thus giving a concentration inequality for $\Delta$.

Despite their generality, all of these bounds implicitly control moments of $\VES$, which makes them difficult to apply in some cases.
The pair $(a,b)$ cannot depend on the sample and typically one would require boundedness of $f$ to obtain a well-behaved $(a,b)$.
One way to avoid these limitations is to revisit exponential ES inequality and analyze \ac{MGF} of \ac{ES} variance proxy in an application-specific way (for example, to assume a subexponential behavior of $\VES$)
\citep{abou2019exponential}.
However, in general this requires knowledge of additional parameters (such as scale and variance factor of the underlying subexponential distribution).
\subsection{Our Contribution}
\paragraph{Semi-empirical Efron-Stein Inequalities.}
In this paper we prove concentration inequalities without making apriori assumptions on the moments of the \ac{ES} variance proxy and instead we state bounds on the upper tail probability in terms of the \emph{semi-empirical \ac{ES}} variance proxy
\begin{equation}
  V = \sum_{k=1}^n \E\br{(f(S) - f(S\repk))^2 \,\middle|\, X_1, \ldots, X_k}\,.
\end{equation}
Note that $V$ is semi-empirical since it depends on both distribution $\sD$ and sample $S$.
Another property of $V$ is that it is \emph{asymmetric} w.r.t.\ the sample $S$ --- in general $V$ depends on the order of elements in the sequence $(X_1, X_2, \ldots, X_n)$, due to conditional expectation.
However, as we discuss in the following section (see applications), this does not affect sums and weakly affects normalized sums.

Our first result (\cref{thm:stability_vrep}) gives an exponential bound
\begin{equation}
  \label{eq:intro:es_1}
  \P\pr{|\Delta| \leq \sqrt{2 (\Vtilrep + y) \pr{1 + \ln(\sqrt{1 + \Vtilrep/y})} x}} \geq 1 - e^{-x}\,, \qquad x \geq 2,\ y > 0~.
\end{equation}
This inequality does not require boundedness of random variables, nor of $f$ --- the only crucial assumption is independence of elements in $S$ from each other.
Observe that \cref{eq:intro:es_1} essentially depends on $V$ and a positive free parameter $y$, which must be selected by the user. 
For instance, a problem agnostic choice of $y = 1/n^2$ for any $x \geq 2$ gives us w.p.\ (with probability) at least $1-e^{-x}$,
\[
  |\Delta| \leq \sqrt{2 (\Vtilrep + 1/n^2) \pr{1 + \ln(\sqrt{1 + n^2 \Vtilrep})} x}~.
\]
This recovers a Bernstein-type behavior, that is, the dominance of the lower-order term whenever $V$ (a variance proxy) is small enough.
The price we pay for such a simple choice of $y$ is a logarithmic term.
In general, one can achieve even sharper bound if the range of $\Vtilrep$ is known (or can be guessed) --- in this case, we can take a union bound over some discretized range of $y$, and select $y$ minimizing the bound.
In addition, we show a version of the bound that does not involve $y$ and thus it is scale-free. This version of our inequality, however, depends on $\E[V]$:
\[
  \P\pr{|\Delta| \leq 2 \sqrt{(\Vtilrep + \E[\Vtilrep]) x}} \geq 1 - \sqrt{2} e^{-x}\,, \qquad x \geq 0~.
\]

\paragraph{PAC-Bayesian Semi-Empirical Efron-Stein Inequalities.}
So far we have presented concentration inequalities which hold for fixed functions $f$.
However, in many learning-theoretic applications we are interested in concentration w.r.t.\ the class of functions, for example when $f$ potentially depends on the data.
In the following we extend our results to the class $\sF(\Theta) \equiv \cbr{f_{\vartheta} \,:\, \sZ \to \reals \,\middle|\, \vartheta \in \Theta}$, where $\Theta$ is some parameter space.
We focus on the stochastic, \emph{PAC-Bayesian} model, where functions are parameterized by a random variable $\param \sim \ps$
given some probability kernel $\wh{p}$ from $\sZ$ to $\Theta$
\footnote{Given $z\in \sZ$, we will use abbreviation $\hat{p}_z(\cdot)$ to denote $\hat{p}(\cdot|z)$.}.
For example, in the statistical learning setting, the predictor is parameterized by $\param$ sampled from $\ps$ called the \emph{posterior}, while $f_{\param}$ represents an empirical loss of the predictor (we discuss this in the upcoming section).
In particular, defining a $\theta$-dependent deviation
\[
  \Delta_{\param}
  =  f_{\param}(S) -\int f_{\param}(s) \, \sD(\diff s)
\]
and a semi-empirical \ac{ES} variance proxy
\[
  V_{\param} = \sum_{k=1}^n \E\br{(f_{\param}(S) - f_{\param}(S\repk))^2 \,\middle|\, \param, X_1, \ldots, X_k}\,,
\]
we show (in \cref{thm:pac_bayes_self_normalized_concentration},~\cref{sec:paces}) that for an arbitrary probability kernel $\wh{p}$ from $\sZ$ to $\Theta$ and an arbitrary probability measure $p^0$ on $\Theta$ called the \emph{prior},
w.p.\ at least $1-e^{-x}$ for any $x \geq 2$, $y > 0$,
\begin{equation}
  \label{eq:intro:es_pac_bayes}
  |\E[\Delta_{\param} \,|\, S]|  
  \leq
  \sqrt{
    2 \pr{\E[V_{\param} \,|\, S] + y}
    \pr{\KL\pr{\ps \,||\, p^0} + x + x \ln\pr{\sqrt{1 + \E[V_{\param} \,|\, S]/y}}}
  }~,
\end{equation}
where the \ac{KL} divergence between $\ps$ and $p^0$ (assuming that $\ps \ll p^0$) captures the \emph{effective} capacity of $\Theta$ under respective measures.
Similarly as before, we also have a $y$-free version, which holds w.p.\ at least $1-2 e^{-x}$ for any $x \geq 0$:
\[
  |\E[\Delta_{\param} \,|\, S]|
  \leq
  \sqrt{2 (\E[V_{\param}] + \E[V_{\param} \,|\, S]) \pr{\KL\pr{\ps \,||\, p^0} + 2 x}}~.
\]
Once again, these results do not require boundedness of random variables, nor of $f \in \sF(\Theta)$, and the concentration is essentially controlled by the \emph{expected} variance-proxy $\E[V_{\param} \,|\, S]$.
Next, we discuss several specializations of our results and note several key connections to the literature on the PAC-Bayesian analysis.
\subsection{Applications}
\label{sec:intro:applications}
Now we discuss several applications of our inequalities.
Throughout this section we assume that inequalities hold for any $x \geq 2$ and any $y > 0$, unless stated otherwise.
\paragraph{Generalization bounds.}
PAC-Bayesian literature often discusses bounds on the \emph{generalization gap}, which is a special case covered by our results.
In this scenario, $f_{\param}$ is defined as an average of some non-negative \emph{loss} functions, incurred by the predictor of interest parameterized by $\param$ on a given \emph{example}.
Here, $\param$ is sampled from the posterior $\ps$ (a density over the parameter space $\Theta$) chosen by the learner.
In particular, let $\sZ_1 = \dots = \sZ_n$ and let $\ell \,:\, \Theta \times \sZ_1 \to R$ be some loss function with co-domain $R \subseteq \reals_+$.
Then, we define the \emph{population loss} and the \emph{empirical loss} as
\[
  L(\param) = \E[\ell(\param, X_1')] \qquad \text{and} \qquad
  \wh{L}_S(\param) = \frac1n \sum_{k=1}^n \ell(\param, X_k)~,
\]
respectively, and taking $f_{\param}(S) = \wh{L}_S(\param)$, the generalization gap is defined as $\Delta_{\param} = \wh{L}_S(\param) - L(\param)$.

The vast majority of PAC-Bayesian literature, e.g.~\citep{mcallester1998some,seeger2002pac,langford2003pac,maurer2004note} assume that the loss function is bounded, i.e.\ w.l.o.g.\ $R \equiv [0, 1]$.
In such case, $V_{\param} \leq 1 / n$ and taking $y = 1/n$, \cref{eq:intro:es_pac_bayes} immediately implies that w.p.\ at least $1-e^{-x}$, 
\[
  |\E[\Delta_{\param} \,|\, S]|  
  \leq
  2 \sqrt{
    \frac1n \pr{\KL\pr{\ps \,||\, p^0} + x (1 + \ln(\sqrt{2}))}
  }~.
\]
This basic corollary tightens classical results by replacing term $\ln(2 \sqrt{n})$ with a universal constant, but slightly looses in terms of a multiplicative constant.
The technical assumption on boundedness of the loss is not easy to avoid and the usual way to circumvent this would be to resort to a sub-exponential behavior of the relevant quantities, such as an empirical loss or a generalization gap~\citep{alquier2016properties,germain2016pac}.
Recently, few works have also looked into the PAC-Bayesian analysis for heavy-tailed losses:~\cite{alquier2018simpler} proposed a polynomial moment-dependent bound with $f$-divergence, while~\cite{holland2019pac} devised an exponential bound which assumes that the second (uncentered) moment of the loss is bounded by a constant.

Here, without any of those assumptions, we obtain (\cref{cor:gen_bound}) a high-probability semi-empirical generalization bound for unbounded loss functions
\footnote{Notation $\leqCln$ suppresses universal constants and logarithmic factors, while $\leqC$ supresses only universal constants.}
($R\equiv[0,\infty)$),
\begin{equation}
  \label{eq:intro:var_bound}
  |\E[\Delta_{\param} \,|\, S]|
  \leqCln
  \sqrt{\frac1n \E\br{\frac1n \sum_{k=1}^n \Big(\ell(\param, X_k)^2 + \ell(\param, X_k')^2\Big) \,\middle|\, S} \KL(\ps\,||\,p^0)} + \frac1n~,
\end{equation}
where $\E[V_{\param} \,|\, S] \leq (1/n^2) \E\br{\sum_{k=1}^n \ell(\param, X_k)^2 + \ell(\param, X_k')^2 \,\middle|\, S}$.
This result is close in spirit to the \emph{localized} bounds of \cite{catoni2007pacbayesian,langford2003pac,tolstikhin2013pac}: for a small variance proxy (here sum of squared losses) we get the dominance of a lower-order term $1/n$. %

Finally, while the bound of \cref{eq:intro:var_bound} is semi-empirical (note that we condition only on $X_k$), by additionally assuming boundedness of the loss,
it implies a fully empirical result (\cref{thm:bounded_losses}):
\[
  |\E[\Delta_{\param} \,|\, S]|
  \leqCln
  \sqrt{\frac{1}{n}\, \E\br{\frac1n \sum_{k=1}^n \ell(\param, X_k)^2 \,\middle|\, S}\, \KL\pr{\ps \,||\, p^0}} + \frac{\KL\pr{\ps \,||\, p^0}}{n} + \frac1n~.
\]
Such \emph{empirical Bernstein bounds}~\citep{audibert2007tuning,maurer2009empirical} in PAC-Bayesian setting were first investigated by~\cite{tolstikhin2013pac}.
The bound we present here is similar to the one of~\cite{tolstikhin2013pac}, but slightly differs since we consider the sum of squared losses (with co-domain $[0,1]$) rather than the sample variance.
Nevertheless, we recover a similar behavior, that is a ``fast'' order $(1 + \KL(\ps \,||\, p^0))/n$ for the small enough variance proxy.
\paragraph{Off-policy Evaluation with \acf{WIS}.}
Consider the stochastic decision making model where the pair of random variables called the \emph{action-reward pair} is distributed according to some unknown joint probability measure $\sD \in \sM_1([K] \times [0,1])$.
In such model, also known as a \emph{stochastic bandit feedback model} (see~\citep{lattimore2018bandit} for a detailed treatment on the subject), an agent takes action $A$ by sampling it from a discrete probability distribution called the \emph{target policy} $\pi \in \sM_1([K])$ and observes a realization of reward $R \sim \sD(\cdot \,|\, A)$.
In the \emph{off-policy} setting of this model, the learner only observes a tuple of actions and rewards $S = \pr{(A_1, R_1),\ldots, (A_n, R_n)}$ generated by sampling each action $A_i$ from another fixed discrete probability distribution $\pi_b \in \sM_1([K])$ called the \emph{behavior} policy, while corresponding rewards are distributed as $R_k \sim \sD(\cdot \,|\, A_k)$.

In the \emph{off-policy evaluation} problem, our goal is to estimate an expected reward, or the \emph{value} of a fixed target policy $\pi$,
\[
  v(\pi) = \sum_{a \in [K]} \pi(a) \E[R | A=a]~,
\]
by relying on $S$, where actions and rewards are collected by policy $\pi_b$.
Since observations are collected by another policy, we face a distribution mismatch problem and an estimator of the value
is typically designed using a variant of an \emph{importance sampling}, while aiming to maintain a good bias and variance trade-off.
In this paper we study \emph{\acf{WIS}} (or \emph{self-normalized importance sampling}) estimator
\[
  \wh{v}\WIS(\pi) = \frac{\sum_{k=1}^n W_k R_k}{\sum_{i=1}^n W_i}
\]
where \emph{importance weights} $W_k$ are defined as a ratio of policies given an action, $W_k = \pi(A_k)/\pi_b(A_k)$.
\ac{WIS} estimator is known for a relatively low variance in practice~\citep{hesterberg1995weighted},
yet it concentrates well even when importance weights are unbounded,
since all of its moments are bounded, which makes it appealing for confidence estimation.

In this paper we show a lower bound on the value of the target policy when employing \ac{WIS}, which partially captures the variance of an estimator.
We prove (\cref{thm:off_policy_eval}) that  w.p.\ at least $1-(n+1)e^{-x}$,
\begin{align*}
  &v(\pi)
  \geq
  \frac{N_x(n)}{n}
  \pr{\wh{v}\WIS(\pi)
    - \sqrt{2 (2 V\WIS + y) \pr{1 + \ln\pr{\sqrt{1 + 2 V\WIS / y}}} x}}_+\\
  \text{where} \qquad &V\WIS = \frac{1}{N_x(n)^2} \sum_{k=1}^n \pr{W_k^2 + \E[W_k^2]}
                        \quad \text{and} \quad
                        N_x(n) = \pr{n - \sqrt{2 \conf n \E\br{W_1^2}}}_+~.
\end{align*}
Here $V\WIS$ acts as a variance proxy and can be easily computed since the distribution of the importance weights is known.
Note that the bound can be further improved by taking a tighter variance proxy (\cref{cor:wm},~\cref{prop:V_bounds}) at an additional computational cost.
Computationally efficient version of $V\WIS$ we discuss here states the rate of concentration in terms of the variance of the importance weights.

Presented high-probability results do not require boundedness of importance weights, nor any form of truncation, prevalent in the literature on (weighted) importance sampling~\citep{swaminathan2015self,BoPe13,thomas2015high_a}.
Indeed, it is not hard to see that by truncating the weights we can apply standard concentration inequalities.
However, truncation biases the estimator and in practice requires to carefully tune the level of truncation to guarantee a good bias-variance trade-off.
We avoid this precisely because our concentration inequalities do not require control of higher moments through boundedness.
While another general Bernstein-type concentration inequalities (e.g.\ \cref{eq:self_bounding,eq:maurer_es}) could be used for such problems, they would require truncation of importance weights.

\paragraph{Off-policy Learning with \ac{WIS}.}
An \emph{off-policy learning} problem is a natural extension of the evaluation problem discussed earlier.
Here, instead of the evaluation of a fixed target policy, our goal is to select a policy from a given class, which maximizes the value.
In this paper we propose a PAC-Bayesian lower bound on the value by specializing our \ac{ES} PAC-Bayesian inequalities. In particular, we consider a class of parametric target policies $\cbr{\pi_{\vartheta} \in \sM_1([K]) \,:\, \vartheta \in \Theta}$ for some parameter space $\Theta$.
Similarly as before, we assume that the parameter $\param$ is sampled from the posterior $\ps$ (typically, chosen by the learner after observing the data), where $\wh{p}$ is some probability kernel $\wh{p}$ from $([K] \times [0,1])^n$ to $\Theta$.
Note that the probability measure $\ps$ depends on the tuple of observed action-reward pairs $S$ generated as described before, and importance weights are now defined w.r.t.\ the random parameter $\param$, that is $W_{\param,k} = \pi_{\theta}(A_k) / \pi_b(A_k)$.

We show (\cref{thm:offpollearn}) that for an arbitrary probability kernel $\hat{p}$ from $([K] \times [0,1])^n$ to $\Theta$ and an arbitrary probability measure $p^0$ over $\Theta$,  w.p.\ at least $1-2e^{-x}$,
\[
  \E[v(\pi_{\param}) \mid S]
  \geq
  \pr{
  \E[\wh{v}\WIS(\pi_{\param})  \mid S]
  - \E\br{ \abs{\frac{n}{N_{\param, x}(n)} - 1} \,\middle|\, S}
  - \sqrt{
  2 \pr{y + \E[V\WIS_{\param} \mid S]}
  C_{x,y}(S)
  }
  }_+
\]
where the effective capacity of the policy class is represented by the \ac{KL} divergence in
\[
  C_{x,y}(S) = \KL\pr{\ps \,||\, p^0} + x + x \ln\pr{\sqrt{1 + \E[V\WIS_{\param} \,|\, S] / y}}~,
\]
and the variance proxy of an estimator is
$
  V\WIS_{\param} = \sum_{k=1}^n \E[\tilde W_{\param,k}^2 + \tilde U_{\param,k}^2 \,|\, \param, A_1, \ldots, A_k ]~,
$
\[
  \text{where} \quad
  \tilde W_{\param,k} = W_{\param,k} / (W_{\param,1} + \dots + W_{\param,n})~, \quad \tilde U_{\param,k} = W_{\param,k}'/(W_{\param,1} + \dots + W_{\param,k}' + \dots + W_{\param,n})~.
\]
Here $\E[ \abs{n/N_{\param, x}(n) - 1} \,|\, S]$ captures the \emph{bias} of an estimator, while $V\WIS_{\param}$ is the \emph{variance} proxy.
Similarly as in case of evaluation, the variance proxy is not fully empirical, however it can be computed exactly since the distribution of importance weights is known (it is given by the behavior policy $\pi_b$ and the target policy $\pi_{\param}$).
The variance proxy presented here is also closely related to the 
\emph{\ac{ESS}} encountered in the Monte-Carlo theory~\citep[Chap.9]{owen2013book},~\citep{elvira2018rethinking}, which provides a problem-dependent convergence rate of the \ac{WIS} estimator.
When all importance weights are equal to one (perfectly matching policies), \ac{ESS} is of order $1/n$; on the other hand \ac{ESS} approaches $1$ when importance weights concentrate on a single action.
The role of \ac{ESS} in a variance-dependent off-policy problems was also observed by~\cite{metelli2018policy}, although in a context of polynomial bounds for fixed target policies.

Thus, maximizing presented lower bound w.r.t.\ a (parametric) probability measure $\ps$ gives a way to learn a target policy maximizing the value, while maintaining a bias-variance trade-off of \ac{WIS} estimator.
This idea is well-known in the off-policy learning literature ---
typically this is done through empirical Bernstein bounds by employing importance sampling estimator~\citep{thomas2015high_b} and sometimes \ac{WIS} estimator~\citep{swaminathan2015self}, however, all these techniques require some form of weight truncation.
As in the off-policy evaluation case, the trade-off between the bias and the variance has to be carefully controlled by tuning the level of truncation.
Our results provide an alternative route, free from an additional tuning.
A truncation-free uniform convergence bounds for importance sampling were also derived by~\cite{cortes2010learning}, however here we explore a PAC-Bayesian analysis approach.
\subsection{Proof Ideas}

Concentration inequalities shown in this paper to some extent are based on the inequalities for \emph{self-normalized} estimators by \cite{pena2008self}.
In particular, we use inequalities derived through the \emph{method of mixtures}~\cite[Chap.\ 12.2.1]{pena2008self}, which hold for the pair of random variables $A \in \reals, B > 0$ satisfying the condition
$\sup_{\lambda \in \reals}\E\br{\exp\pr{\lambda A - \lambda^2 B^2 / 2}} \leq 1~.$
Such random variables are called a \emph{canonical pair} and our semi-empirical \ac{ES} inequalities follow by proving that $(\Delta, \sqrt{V})$ indeed forms a canonical pair.
We do so by applying exponential decomposition inspired by the proof of the Azuma-Hoeffding's inequality to the condition stated above, while $\Delta$ is represented by the Doob martingale difference sequence.
Note that a similar technique was also applied by~\cite{rakhlin2017equivalence} in the context of the self-normalized martingales.

Our PAC-Bayesian inequalities follow a more involved argument. As in the classical PAC-Bayesian analysis we start from the Donsker-Varadhan change-of-measure inequality applied to the function $f(\param) = \lambda \E[\Delta_{\param} - (\lambda^2/2) V_{\param} \,|\, S]$ for $\lambda \in \reals$, and note that the log-\ac{MGF} of $f$ at the prior parameter is bounded by $1$ due to the canonical pair argument.
The rest of the proof is dedicated to the tuning of $\lambda$.
One possibility here would be to apply the union bound argument of~\cite{seldin2012pac}, however, since $\lambda$ is unbounded, this allows us to take a more straightforward path.
In particular, we employ the method of mixtures (used in \citep[Chap.\ 12.2.1]{pena2008self} to derive the aforementioned inequalities)
to achieve analytic tuning of the bound w.r.t.\ $\lambda$.
The idea behind the method of mixtures is to integrate the parameter of interest under some analytically-integrable probability measure.
The choice of the Gaussian density with variance $y^2$ (recall that $y$ is a free parameter) and the Gaussian integration w.r.t.\ $\lambda$ leads to the concentration inequalities.
To the best of our knowledge, this is the first application of the method of mixtures in PAC-Bayesian analysis, which is an alternative technique to analytical (union bound)~\citep{seldin2012pac,tolstikhin2013pac} and empirical tuning of $\lambda$ as in \citep{thiemann2016strongly}.

Finally, described applications follow from the analysis of the semi-empirical \ac{ES} variance proxy.
In case of the generalization error, such analysis is straightforward --- our main bounds are obtained by observing multiple cancellations in $\wh{L}_S(\param) - \wh{L}_{S\repk}(\param)$.
The case of \ac{WIS}, this comes by the \emph{stability} analysis of the estimator --- given the removal of the $k$-th importance weight, the difference of estimators is bounded by $W_k / (W_1 + \dots + W_n)$.

\subsection{Additional Related Work}
Our PAC-Bayesian bounds are related to the martingale bounds of~\cite{seldin2012pac}.
In particular, our results need to be compared to the PAC-Bayes-Bernstein inequality for martingales~\cite[Theorem 8]{seldin2012pac}.
In principle, their inequality could be applied to the Doob martingale difference sequence to prove a concentration bound.
However, this would hold only for the \emph{bounded} difference sequences, restricted family of probability kernels, and would yield inequality with a different 'less empirical' \ac{ES} variance proxy (with conditioning up to $k-1$ elements in expectation).
The technique of~\cite{seldin2012pac,tolstikhin2013pac,thiemann2016strongly} at its heart relies on the self-bounding property of the variance proxy to control the log-\ac{MGF} term arising due to the PAC-Bayesian analysis.
This control is possible thanks to inequalities obtained through the entropy method.
On the other hand, self-bounding property in these cases holds for a limited range of $\lambda$, and the method of mixtures applied in our proofs cannot be used here without introduction of superfluous error terms (because of the clipped Gaussian integration).
Another direction in controlling log-\ac{MGF}, related to the empirical Bernstein inequalities for martingales was explored in the online learning literature~\citep{cesa2007improved,wintenberger2017optimal} by linearization of $x \to e^{x-x^2}$.

PAC-Bayesian analysis in learning theory is not restricted to the generalization gap discussed in~\cref{sec:intro:applications}.
Several works~\citep{maurer2004note,seeger2002pac} have investigated generalization by giving upper bounds on the \ac{KL} divergence of a Bernoulli variable (assuming that loss function is bounded on $[0,1]$), which are
clearly tighter than the difference bounds (due to the Pinsker's inequality).
In this paper we forego this setting for the sake of generality, however we suspect that KL-Bernoulli \ac{ES} bounds can be derived for the bounded loss functions.

A number of works have also looked into PAC-Bayesian bounds (and bounds for the closely related \emph{Gibbs} predictors) on the \emph{excess risk} $\E[L(\param) \,|\, S] - \inf_{\vartheta \in \Theta} L(\vartheta)$,~e.g.\ \citep{catoni2007pacbayesian,alquier2016properties,kuzborskij2019distribution,grunwald2019tight}.
A tantalizing line of research would be to investigate the use of our semi-empirical results in the context of an excess risk analysis.

Finally, in recent years many works~\citep{dziugaite2017computing,neyshabur2018pac,rivasplata2018pac,mhammedi2019pacbayes} have observed that PAC-Bayesian bounds tend to give less conservative numerical estimates of the generalization gap for neural networks compared to the alternative techniques based on the concentration of empirical processes.
Semi-empirical bounds proposed in this paper offer opportunities for sharpening these results in a data-dependent fashion.
\paragraph{Organization.}
In \cref{sec:concentration} we prove concentration inequalities for the fixed function, where the proof crucially relies on \cref{lem:canonical}, whose proof deferred to the \cref{sec:proofs_semi_empirical}.
In \cref{sec:paces} we present the PAC-Bayesian extension of the concentration inequalities and present a major part of its proof, which is one of the main contributions of this paper.
Finally, proofs for generalization bounds are presented in \cref{sec:gen_bounds}, while proofs for the \ac{WIS} value bounds are presented in \cref{sec:wa} and \cref{sec:pac_bayesian_wis}.
\section{Preliminaries}
Throughout this paper, we use $f \leqC g$ to indicate that there exists a universal constant $C > 0$ such that $f \leq C g$ holds uniformly over all arguments.
We use $\leqCln$ to indicate a version of $\leqC$ which also supresses logarithmic factors.
Notation $(s)_+$ is used to denote the positive part of the real number $s\in \R$.
We use notation $\sM_1(\sA)$ to denote a family of probability measures supported on
a set $\sA$.
If $p$ and $q$ are densities over $\Theta$ such that $p \ll q$ and $X \sim p$, the \acf{KL} divergence between $p$ and $q$ is defined as
$\KL(p, q) = \E\br{\ln\big(p(X) / q(X)\big)}$.
We say that a random variable $X$ is $\nu$-subgaussian if
$\ln \E\br{e^{\lambda (X - \E[X])}} \leq \lambda^2 \nu$ for every $\lambda \in \reals$.

\section{Semi-Empirical Concentration Inequalities}
\label{sec:concentration}
In this section we prove \emph{semi-empirical} concentration inequalities.
Recall that we focus on measurable functions $f : \sZ \to \reals$ of a random tuple $S = \pr{X_1, X_2, \ldots, X_n}\sim \sD \in \sM_1(\sZ)$, where elements of $S$ are distributed independently from each other and $\sZ = \sZ_1 \times \dots \times \sZ_n$.
In this section we prove bounds on the upper tail probability of $\Delta = f(S) - \E[f(S)]$.
\begin{theorem}
  \label{thm:stability_vrep}
  Let the semi-empirical Efron-Stein variance proxy be defined as
  \begin{equation}
    V = \sum_{k=1}^n \E\br{(f(S) - f(S\repk))^2 \,\middle|\, X_1, \ldots, X_k}\,.
  \end{equation}
  Then, for any $\conf \geq 2$, with probability at least $1-e^{-\conf}$ and any $y > 0$,
  \[
    |\Delta| \leq \sqrt{2 (\Vtilrep + y) \pr{1 + \frac{1}{2} \ln\pr{1 + \frac{\Vtilrep}{y}}} x}~.
  \]
  In addition, for any $\conf>0$, with probability at least $1-\sqrt{2} e^{-\conf}$ we have
  \[
    |\Delta| \leq 2 \sqrt{(\Vtilrep + \E[\Vtilrep]) \conf}~.
  \]  
\end{theorem}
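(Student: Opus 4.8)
The plan is to derive both displayed inequalities from a single structural fact — that $(\Delta,\sqrt{V})$ is a \emph{canonical pair} in the sense of \cite{pena2008self}, i.e.
\[
  \sup_{\lambda \in \reals} \E\br{\exp\pr{\lambda \Delta - \tfrac{\lambda^2}{2} V}} \leq 1~,
\]
and then to convert this into tail statements via the method of mixtures. This canonical-pair bound is exactly the content of \cref{lem:canonical}, so the bulk of the work — and the main obstacle — is establishing it without any boundedness assumption; everything afterwards is a short computation. I would therefore isolate the canonical-pair statement as a lemma and obtain both parts of the theorem as corollaries.

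To prove the canonical-pair bound I would represent $\Delta$ as a Doob martingale. With $\sF_k = \sigma(X_1,\ldots,X_k)$ and $D_k = \E\br{f(S)\mid\sF_k} - \E\br{f(S)\mid\sF_{k-1}}$ we have $\Delta = \sum_{k=1}^n D_k$, $\E\br{D_k\mid\sF_{k-1}}=0$, and $V = \sum_{k=1}^n V_k$ with $V_k = \E\br{(f(S)-f(S\repk))^2\mid\sF_k}$. Since $\exp(\lambda\Delta - \tfrac{\lambda^2}{2}V) = \prod_k \exp(\lambda D_k - \tfrac{\lambda^2}{2}V_k)$ and each factor is nonnegative and $\sF_k$-measurable, processing the product from $k=n$ downward reduces the claim to the per-coordinate bound $\E\br{\exp(\lambda D_k - \tfrac{\lambda^2}{2}V_k)\mid\sF_{k-1}} \le 1$. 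Here the \emph{semi-empirical} conditioning (on $X_1,\dots,X_k$, including $X_k$) is essential: writing $g_k$ for $\E\br{f(S)\mid\sF_k}$ viewed as a function of $(X_1,\dots,X_k)$, conditional Jensen gives, with $X_k'$ an independent copy, $V_k \ge \E\br{(g_k(X_1,\dots,X_k) - g_k(X_1,\dots,X_{k-1},X_k'))^2\mid\sF_k} = D_k^2 + \E\br{D_k^2\mid\sF_{k-1}}$. Because $-\tfrac{\lambda^2}{2}V_k$ enters with a minus sign, this lower bound on $V_k$ suffices, and it remains to invoke the elementary scalar inequality $e^{u-u^2/2}\le 1+u+\tfrac{u^2}{2}$ with $u=\lambda D_k$, whereupon $\E\br{D_k\mid\sF_{k-1}}=0$ closes the step. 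I expect the Jensen/variance identity and the verification of this scalar inequality to be the only delicate points.

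Granting the canonical-pair bound, I would apply the method of mixtures: integrating $\exp(\lambda\Delta - \tfrac{\lambda^2}{2}V)$ against the centered Gaussian density in $\lambda$ of variance $1/y$ and exchanging expectation with integration by Fubini, a Gaussian integral yields
\[
  \E\br{\pr{1 + V/y}^{-1/2}\exp\pr{\frac{\Delta^2}{2(V+y)}}} \leq 1~.
\]
Integrating over the whole line controls $|\Delta|$ directly, since $\Delta$ appears only through $\Delta^2$.

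For the first inequality I would apply Markov to $Z = (1+V/y)^{-1/2}\exp(\Delta^2/(2(V+y)))$: on the event $\{\Delta^2 > 2(V+y)x(1+\tfrac12\ln(1+V/y))\}$ one has $Z > e^x(1+V/y)^{(x-1)/2}$, and since $x\ge 2$ and $V/y\ge 0$ the last factor is $\ge 1$, so this event lies in $\{Z > e^x\}$, of probability at most $e^{-x}$. For the scale-free inequality I would take the legitimate deterministic choice $y = \E[V]$ and bound $\P(\Delta^2 > 4(V+\E[V])x)$ by Cauchy--Schwarz, splitting the indicator as $Z^{1/2}\cdot\pr{G^{1/2}e^{-\Delta^2/(2(V+\E[V]))}\mathbb{I}\{\cdot\}}^{1/2}$ with $G = 1+V/\E[V]$: the first factor contributes $\sqrt{\E[Z]}\le 1$, while on the event $e^{-\Delta^2/(2(V+\E[V]))}\le e^{-2x}$ and Jensen gives $\E\br{\sqrt{G}}\le\sqrt{\E[G]}=\sqrt 2$, yielding the bound $2^{1/4}e^{-x}\le \sqrt 2\,e^{-x}$ as claimed.
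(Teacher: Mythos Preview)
Your proposal is correct and follows the same high-level structure as the paper: first show that $(\Delta,\sqrt{V})$ is a canonical pair (the paper's \cref{lem:canonical}), then read off both tail bounds from the Gaussian method of mixtures (the paper simply cites \cref{thm:self_norm_concentration}, i.e.\ Theorem~12.4 of \cite{pena2008self}, while you re-derive those inequalities, which amounts to the same argument). The one genuine difference is in how you close the per-coordinate step $\E_{k-1}\bigl[\exp(\lambda D_k - \tfrac{\lambda^2}{2}V_k)\bigr]\le 1$. The paper writes $\lambda D_k - \tfrac{\lambda^2}{2}V_k = \E_k\bigl[\lambda\Delta_k - \tfrac{\lambda^2}{2}\Delta_k^2\bigr]$ with $\Delta_k = f(S)-f(S\repk)$, applies Jensen to pull $\E_k$ out of the exponential, and then uses a Rademacher symmetrization: since $\Delta_k$ and $-\Delta_k$ are identically distributed under $\E_{k-1}$, one may insert $\varepsilon\in\{\pm 1\}$, and the subgaussianity of a Rademacher sign gives $\E_{\varepsilon}[e^{\varepsilon\lambda\Delta_k - \lambda^2\Delta_k^2/2}]\le 1$ pointwise. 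Your route instead uses conditional Jensen on the square to get $V_k \ge D_k^2 + \E[D_k^2\mid\sF_{k-1}]$, then the scalar bound $e^{u-u^2/2}\le 1+u+\tfrac{u^2}{2}$ with $u=\lambda D_k$, and finally $1+x\le e^x$ to cancel the leftover $e^{-\tfrac{\lambda^2}{2}\E[D_k^2\mid\sF_{k-1}]}$. Both arguments are valid; yours is a touch more elementary (no symmetrization device, just calculus), while the paper's makes the role of the exchangeability of $X_k$ and $X_k'$ explicit through the Rademacher variable.
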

Note the similarity to the Efron-Stein inequality, which bounds the variance of $f(S)$ with $\E[\Vtilrep]$. 
The proof of \cref{thm:stability_vrep} combines the argument used in the proof of McDiarmid's/Azuma-Hoeffding's inequality 
with a concentration inequality due to \citet{pena2008self}. To state this inequality, recall that a pair $(A,B)$ of random variables is called a \textbf{canonical pair} if $B \ge 0$ and 
\begin{align}
    \label{eq:dom_condition}
	\sup_{\lambda\in \R} \E\br{\exp\pr{\lambda A - \frac{\lambda^2}{2} B^2}} \leq 1~.
\end{align}
The result of \citeauthor{pena2008self} shows that if $(A,B)$ is a canonical pair then $|A|$ has a (random) subgaussian behavior with variance proxy $B^2$: 
\begin{theorem}[Theorem 12.4 of \cite{pena2008self}]
  \label{thm:self_norm_concentration}
  Let $(A,B)$ be a canonical pair. Then, for any $t > 0$,
  \[
    \P\pr{\frac{|A|}{\sqrt{B^2 + (\E[B])^2}} \geq t} \leq \sqrt{2} e^{-\frac{t^2}{4}}~.
  \]
  In addition, for all $t \geq \sqrt{2}$ and $y > 0$,
  \[
    \P\pr{\frac{|A|}{(B^2 + y) \pr{1 + \frac{1}{2} \ln\pr{1 + \frac{B^2}{y}}}} \geq t} \leq e^{-\frac{t^2}{2}}~.
  \]
\end{theorem}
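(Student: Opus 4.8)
The plan is to derive both inequalities from a single \emph{mixture inequality} obtained from the canonical-pair condition~\eqref{eq:dom_condition} by the method of mixtures, and then to extract the two tail bounds by elementary arguments — Markov for the second, Cauchy--Schwarz for the first.

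First I would mix the canonical inequality against a centred Gaussian. Fix $y > 0$ and let $\phi(\lambda) = \sqrt{y/(2\pi)}\,\exp(-y\lambda^2/2)$ be the density of the $N(0,1/y)$ law. Since $\exp(\lambda A - \tfrac{\lambda^2}{2}B^2) \ge 0$, Tonelli's theorem allows exchanging $\E$ with $\int \diff\lambda$, so that
\[
  \E\br{\int_\R \exp\pr{\lambda A - \tfrac{\lambda^2}{2}B^2}\phi(\lambda)\,\diff\lambda}
  = \int_\R \E\br{\exp\pr{\lambda A - \tfrac{\lambda^2}{2}B^2}}\phi(\lambda)\,\diff\lambda
  \le \int_\R \phi(\lambda)\,\diff\lambda = 1,
\]
the inequality being~\eqref{eq:dom_condition}. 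Completing the square in $\lambda$ and evaluating the resulting Gaussian integral puts the inner integral in closed form, yielding the mixture inequality
\[
  \E\br{\sqrt{\tfrac{y}{B^2+y}}\,\exp\pr{\frac{A^2}{2(B^2+y)}}} \le 1,\qquad y>0.
\]
This is the workhorse; both displayed tail bounds follow by choosing $y$ and taming the random prefactor $\sqrt{y/(B^2+y)}$.

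For the second bound I would rewrite the prefactor as $\sqrt{y/(B^2+y)} = \exp\!\big(-\tfrac12\ln(1+B^2/y)\big)$, so the mixture inequality reads $\E[\exp(R-L)]\le 1$ with $R = \tfrac{A^2}{2(B^2+y)}$ and $L = \tfrac12\ln(1+B^2/y)\ge 0$. On the event $\{A^2 \ge t^2(B^2+y)(1+L)\}$ one has $R \ge \tfrac{t^2}{2}(1+L)$, whence $R-L \ge \tfrac{t^2}{2} + (\tfrac{t^2}{2}-1)L \ge \tfrac{t^2}{2}$ as soon as $t \ge \sqrt2$, using $L\ge 0$. Markov's inequality applied to $\exp(R-L)$ then gives the claimed $e^{-t^2/2}$ tail, i.e.\ $\P\big(|A| \ge t\sqrt{(B^2+y)(1+\tfrac12\ln(1+B^2/y))}\big)\le e^{-t^2/2}$. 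The restriction $t\ge\sqrt2$ is precisely what absorbs the logarithmic correction $L$.

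For the first bound I would instead set $y = (\E[B])^2$ — a legitimate deterministic choice — and abbreviate $\beta = \E[B]$. Writing $G = \{A^2 \ge t^2(B^2+\beta^2)\}$, on $G$ we have $\exp\!\big(\tfrac{A^2}{2(B^2+\beta^2)}\big) \ge e^{t^2/2}$, so the mixture inequality gives $\E\!\big[\ind{G}\,\beta/\sqrt{B^2+\beta^2}\big] \le e^{-t^2/2}$. To recover $\P(G)$ I would apply Cauchy--Schwarz, writing $\ind{G}$ as the product of $\ind{G}(\beta/\sqrt{B^2+\beta^2})^{1/2}$ and $(\sqrt{B^2+\beta^2}/\beta)^{1/2}$:
\[
  \P(G) = \E\br{\ind{G}\pr{\tfrac{\beta}{\sqrt{B^2+\beta^2}}}^{1/2}\pr{\tfrac{\sqrt{B^2+\beta^2}}{\beta}}^{1/2}}
  \le \sqrt{\E\br{\ind{G}\tfrac{\beta}{\sqrt{B^2+\beta^2}}}}\;\sqrt{\E\br{\tfrac{\sqrt{B^2+\beta^2}}{\beta}}},
\]
and control the second factor via $\sqrt{B^2+\beta^2}\le B+\beta$ and $\E[B]=\beta$, giving $\E[\sqrt{B^2+\beta^2}/\beta]\le 2$. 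Combining the two factors yields $\P(G)\le\sqrt2\,e^{-t^2/4}$, which is the first inequality after substituting $\beta=\E[B]$.

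The main obstacle is the random, $B$-dependent prefactor $\sqrt{y/(B^2+y)}$ introduced by the mixing: it can be arbitrarily small when $B$ is large, so naive Markov on $\exp(A^2/2(B^2+y))$ fails outright. The two bounds are exactly two ways of neutralising it — logarithmically (at the cost of $t\ge\sqrt2$ and the $\ln(1+B^2/y)$ factor) or via Cauchy--Schwarz against $\E[B]$ (at the cost of the constant $\sqrt2$ and the halved exponent $t^2/4$). The Gaussian integral and the integrability needed for Tonelli are routine; the genuinely delicate points are the sign bookkeeping ensuring $R-L \ge t^2/2$ for $t\ge\sqrt2$, and the particular Cauchy--Schwarz split that keeps the residual second moment controlled by $\E[B]$.
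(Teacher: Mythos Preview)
The paper does not supply its own proof of this result --- it is quoted from \cite{pena2008self} and used as a black box to obtain \cref{thm:stability_vrep}. Your argument is correct and is precisely the method-of-mixtures proof from that source; moreover it mirrors, step for step, the reasoning the paper itself deploys for the PAC-Bayesian analogues in \cref{thm:pac_bayes_self_normalized_mgf} and \cref{thm:pac_bayes_self_normalized_concentration}: the same Gaussian mixing to obtain the key inequality (cf.\ \eqref{eq:integrated_mixture}), the same Markov-with-logarithmic-absorption step for the $y$-dependent bound (cf.\ the derivation of \eqref{eq:pac_bayes_mix_4} in the appendix), and the same Cauchy--Schwarz split against the random prefactor $\sqrt{y/(B^2+y)}$ followed by $\sqrt{B^2+\beta^2}\le B+\beta$ for the $\E[B]$-dependent bound (cf.\ \eqref{eq:proving_pac_bayes_mix_2:1} and the lines after it).

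One remark: you have in passing corrected what appears to be a typo in the second display of the theorem as stated. Your argument yields $\P\big(|A|\ge t\sqrt{(B^2+y)(1+\tfrac12\ln(1+B^2/y))}\big)\le e^{-t^2/2}$, with a square root in the denominator; the printed statement omits it. The square-root form is the correct one, consistent both with the cited reference and with how the paper itself applies the result in \cref{thm:stability_vrep}.
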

Thus, provided that $(\Delta,\sqrt{V})$ is a canonical pair, it is easy to see that 
\cref{thm:stability_vrep} follows from \cref{thm:self_norm_concentration} applied to $(\Delta,\sqrt{V})$.
Thus, it remains to be seen that $(\Delta,\sqrt{V})$ is a canonical pair,
which is established by the following lemma, whose proof is presented in~\cref{sec:proofs_semi_empirical}:
\begin{lemma}
  \label{lem:canonical}
  $(\Delta, \sqrt{V})$ is a canonical pair.
\end{lemma}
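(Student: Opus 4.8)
The plan is to verify the canonical pair condition~\eqref{eq:dom_condition}, namely that
\[
  \sup_{\lambda \in \reals} \E\br{\exp\pr{\lambda \Delta - \frac{\lambda^2}{2} V}} \leq 1~,
\]
by exploiting the Doob martingale decomposition of $\Delta$ together with the coordinatewise independence of $S$. First I would write $\Delta = f(S) - \E[f(S)] = \sum_{k=1}^n D_k$, where $D_k = \E[f(S) \mid X_1, \ldots, X_k] - \E[f(S) \mid X_1, \ldots, X_{k-1}]$ is the Doob martingale difference sequence associated with revealing the coordinates one at a time. The key structural observation (as hinted in the ``Proof Ideas'' section) is that each $V_k \defeq \E[(f(S) - f(S\repk))^2 \mid X_1, \ldots, X_k]$ controls the conditional behavior of $D_k$, so that $V = \sum_{k=1}^n V_k$ serves as a running variance proxy for the martingale.

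The central step is to process the exponential moment one coordinate at a time, from $k=n$ down to $k=1$, peeling off conditional expectations in the spirit of the Azuma--Hoeffding proof. Concretely, I would aim to establish the per-coordinate bound
\[
  \E\br{\exp\pr{\lambda D_k - \frac{\lambda^2}{2} V_k} \,\middle|\, X_1, \ldots, X_{k-1}} \leq 1~,
\]
which is a conditional one-dimensional subgaussian-type estimate. This is where the replacement variable $X_k'$ enters: because $X_k'$ is an independent copy, the quantity $f(S) - f(S\repk)$ compares the true coordinate against a fresh draw, and a symmetrization/convexity argument (Jensen applied to the conditional expectation over $X_k'$, which has mean zero in the relevant difference) yields that $D_k$ has conditional exponential moments dominated by the Gaussian factor $\exp(\lambda^2 V_k / 2)$. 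Once this single-coordinate inequality holds, I would apply the tower property iteratively: conditioning on $X_1, \ldots, X_{n-1}$ and using the bound for $k=n$ collapses the $n$th factor, then repeating for $k = n-1, \ldots, 1$ telescopes the whole product down to $1$, uniformly in $\lambda$, giving exactly~\eqref{eq:dom_condition}.

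The main obstacle I anticipate is the per-coordinate estimate. The subtlety is that $D_k$ is a conditional expectation of $f$ rather than $f$ itself, whereas $V_k$ is defined through the squared increment $(f(S) - f(S\repk))^2$; bridging these requires care, since the martingale difference involves integrating out $X_{k+1}, \ldots, X_n$ while the variance proxy conditions only on $X_1, \ldots, X_k$. The natural route is to use Jensen's inequality to pass the exponential inside the conditional expectation over the future coordinates, reducing the problem to a statement about the ``raw'' increment $f(S) - f(S\repk)$, and then to invoke a subgaussian lemma for a single mean-zero bounded-difference-free increment. Because no boundedness of $f$ is assumed, the crucial point is that this subgaussian control must come purely from the second-moment quantity $V_k$ via an exponential inequality that does not appeal to the range of $f$; I expect this to rest on a symmetrization over the independent copy $X_k'$, so that the relevant increment is a symmetric random variable whose exponential moment is controlled by its variance proxy. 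Assembling this symmetrization correctly, and confirming that the resulting bound holds for \emph{all} $\lambda \in \reals$ (not merely a bounded range), is the delicate part and the key to obtaining the unconditional canonical-pair property.
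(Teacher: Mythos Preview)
Your proposal is correct and follows essentially the same route as the paper: Doob martingale decomposition of $\Delta$, the per-coordinate bound $\E_{k-1}[\exp(\lambda D_k - \tfrac{\lambda^2}{2} V_k)] \le 1$, Jensen applied to $\E_k$ to pass from $(D_k,V_k)$ to the raw increment $\Delta_k = f(S)-f(S\repk)$, and then symmetrization via the exchangeability of $X_k$ and $X_k'$ (the paper makes this explicit by inserting a Rademacher sign and using that $x\varepsilon$ is $x^2/2$-subgaussian) followed by the telescoping tower argument. The only point you leave slightly implicit is the concrete mechanism for the last step---namely that symmetry of $\Delta_k$ given the remaining coordinates lets one replace $\lambda\Delta_k$ by $\varepsilon\lambda\Delta_k$ at no cost, after which $\E[\exp(\varepsilon\lambda\Delta_k - \tfrac{\lambda^2}{2}\Delta_k^2)\mid S,S']\le 1$ holds pointwise for all $\lambda\in\reals$---but you have correctly identified this as the crux.
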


\section{PAC-Bayesian Bounds}
\label{sec:pacbayes}
In this section we prove a PAC-Bayesian version of~\cref{thm:self_norm_concentration}, which will consequently imply a PAC-Bayesian bound for classes of functions.

Let $\Theta$ denote an index set.
We call the collection $(\Delta_\theta,\sqrt{V_\theta})_{\theta\in \Theta}$
of pairs of random variables  indexed by elements of $\Theta$
a \textbf{canonical family}, if for each $\theta\in \Theta$, $(\Delta_\theta,\sqrt{V_\theta})$ is  
a canonical pair.
In this section we prove new PAC-Bayesian inequalities for families of canonical pairs that arise as functions of a common random element.
Below we let $\hat{p}$ to denote a probability kernel from $\sZ$ to $\Theta$.
Also, for brevity, given $z\in \sZ$, we will also use $\hat{p}_z(\cdot)$ to denote $\hat{p}(\cdot|z)$.

\begin{theorem}
  \label{thm:pac_bayes_self_normalized_concentration}
  For some space $\Theta$, let
$(\Delta_\theta,\sqrt{V_\theta})_{\theta\in \Theta}$ be a canonical family and $S\sim \sD$, jointly distributed with $(\Delta_\theta,\sqrt{V_\theta})_{\theta\in \Theta}$ and taking values in $\sZ$.
Fix an arbitrary probability kernel $\hat{p}$ from $\sZ$ to $\Theta$ and an arbitrary probability measure $p^0$ over $\Theta$, and let $\param \sim \ps$.
Then,  for any $x\ge 0$,
  with probability at least $1-2 e^{-x}$ 
  we have that
  \begin{equation}
    \label{eq:pac_bayes_mix_3}
    |\E[\Delta_{\param} \,|\, S]|
    \leq
    \sqrt{2 (\E[V_{\param}] + \E[V_{\param} \,|\, S]) \pr{\KL\pr{\ps \,||\, p^0} + 2 x}}~.
  \end{equation}
  In addition, for all $y > 0$ and $x \geq 2$ with probability at least $1-e^{-x}$ we have
  \begin{equation}
    \label{eq:pac_bayes_mix_4}
    |\E[\Delta_{\param} \,|\, S]|
    \leq
    \sqrt{
      2 \pr{y + \E[V_{\param} \,|\, S]}
      \pr{\KL\pr{\ps \,||\, p^0} + x + \frac{x}{2} \ln\pr{1 + \frac{1}{y} \E[V_{\param} \,|\, S]}}
    }~.
  \end{equation}
\end{theorem}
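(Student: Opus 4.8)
The plan is to lift the method-of-mixtures proof of \cref{thm:self_norm_concentration} to the PAC-Bayesian setting by inserting a Donsker--Varadhan change of measure, exploiting that the prior $p^0$ is data-independent so that the canonical bound \eqref{eq:dom_condition} survives the outer expectation. Fix a probability density $h$ on $\reals$ (to be chosen), set $G(\theta) = \int_{\reals} e^{\lambda \Delta_\theta - \frac{\lambda^2}{2} V_\theta}\, h(\lambda)\, \diff\lambda$, and let $Z = \E_{\theta \sim p^0}[G(\theta)]$. By Tonelli and the canonical-family assumption, $\E[Z] = \int_{\reals} \E_{\theta \sim p^0}\!\big[\E[e^{\lambda \Delta_\theta - \frac{\lambda^2}{2}V_\theta}]\big]\, h(\lambda)\,\diff\lambda \le 1$, so Markov's inequality gives $\ln Z \le x$ with probability at least $1 - e^{-x}$. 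Independently, for every realization of the randomness the Donsker--Varadhan variational formula applied to $\theta \mapsto \ln G(\theta)$ yields the pointwise bound $\E[\ln G(\param) \mid S] \le \KL\pr{\ps \,||\, p^0} + \ln Z$. Combining the two, with probability at least $1 - e^{-x}$,
\[
  \E[\ln G(\param) \mid S] \le \KL\pr{\ps \,||\, p^0} + x .
\]
This \emph{master inequality} is the heart of the argument; both displayed bounds follow by specializing $h$ and evaluating the resulting Gaussian integral.

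For \eqref{eq:pac_bayes_mix_4} I would take $h$ to be the centered Gaussian density of variance $1/y$. Completing the square gives the closed form $\ln G(\theta) = \frac{\Delta_\theta^2}{2(V_\theta + y)} - \frac12 \ln\!\big(1 + V_\theta/y\big)$, so the master inequality reads
\[
  \E\!\br{\frac{\Delta_\param^2}{2(V_\param + y)} \,\middle|\, S} - \frac12\, \E\!\br{\ln\!\big(1 + V_\param/y\big) \,\middle|\, S} \le \KL\pr{\ps \,||\, p^0} + x .
\]
Now I would pull the posterior expectation inside via two convexity steps: Cauchy--Schwarz under $\ps$ gives $\E[\Delta_\param^2/(V_\param + y) \mid S] \ge (\E[\Delta_\param \mid S])^2 / (\E[V_\param \mid S] + y)$, while Jensen (concavity of $\ln$) gives $\E[\ln(1 + V_\param/y) \mid S] \le \ln(1 + \E[V_\param \mid S]/y)$. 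Substituting, rearranging, relaxing $\tfrac12 \le \tfrac{x}{2}$ (valid since $x \ge 2$ and the logarithm is nonnegative) to match the stated constant, and taking a square root yields \eqref{eq:pac_bayes_mix_4}.

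For the scale-free bound \eqref{eq:pac_bayes_mix_3} I would instead choose $h$ to be the mixing density underlying the first part of \cref{thm:self_norm_concentration}, whose scale is tied to $\E[V_\param] = \E[\E[V_\param \mid S]]$ rather than to a free parameter; this removes both the tunable $y$ and the logarithmic term, at the cost of the prior-averaged proxy $\E[V_\param]$ appearing alongside $\E[V_\param \mid S]$ and of the tail constant degrading from $e^{-x}$ to $2 e^{-x}$ (inherited from the $\sqrt{2}\,e^{-t^2/4}$ tail of that statement). The same Cauchy--Schwarz and Jensen reductions then move the posterior expectation inside and produce \eqref{eq:pac_bayes_mix_3}.

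The main obstacle is getting the order of operations exactly right: the canonical bound must be applied under the fixed prior $p^0$ \emph{before} the data-dependent posterior $\ps$ enters, so that $\E[Z] \le 1$ holds simultaneously for the whole $\lambda$-family with no union bound over $\lambda$, and the change of measure must be invoked pointwise (per realization) so that its $\ln Z$ term is controlled by the single Markov step. The second delicate point is the Cauchy--Schwarz reduction $\E[\Delta_\param^2/(V_\param+y) \mid S] \ge (\E[\Delta_\param\mid S])^2/(\E[V_\param\mid S]+y)$, which is precisely what converts the self-normalized quadratic into the target deviation $|\E[\Delta_\param \mid S]|$ while correctly replacing $V_\param$ by its posterior mean in the denominator.
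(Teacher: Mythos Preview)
Your argument for \eqref{eq:pac_bayes_mix_4} is correct but runs the two key steps in the \emph{opposite order} from the paper. You integrate over $\lambda$ first (forming $G(\theta)$), then apply Donsker--Varadhan to $\ln G$, and finally need Cauchy--Schwarz and Jensen under $\ps$ to pull $\E[\Delta_\theta\mid S]$ and $\E[V_\theta\mid S]$ out of the nonlinear expression $\Delta_\theta^2/(V_\theta+y)$. The paper instead applies Donsker--Varadhan \emph{for each fixed $\lambda$} to the function $\theta\mapsto \lambda\Delta_\theta-\tfrac{\lambda^2}{2}V_\theta$; because this is linear in $(\Delta_\theta,V_\theta)$, the posterior expectation passes through immediately, giving $\E_S\!\big[e^{\lambda\E[\Delta_\theta\mid S]-\frac{\lambda^2}{2}\E[V_\theta\mid S]-\KL(\ps\|p^0)}\big]\le 1$. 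Only then is the Gaussian mixture over $\lambda$ taken, which yields \eqref{eq:pac_bayes_mix_1} with $\E[\Delta_\theta\mid S]$ and $\E[V_\theta\mid S]$ already in place, and \eqref{eq:pac_bayes_mix_4} follows by a Markov step---no Cauchy--Schwarz or Jensen required. Your route buys a marginally sharper intermediate bound ($\KL+x+\tfrac12\ln(\cdot)$ before you relax $\tfrac12\le\tfrac{x}{2}$); the paper's route buys simplicity and, more importantly, an exponential inequality \eqref{eq:pac_bayes_mix_1} at the level of the \emph{outer} expectation over $S$, which is what drives \eqref{eq:pac_bayes_mix_3}.

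That is where your sketch for \eqref{eq:pac_bayes_mix_3} has a genuine gap. There is no single ``mixing density underlying the first part of \cref{thm:self_norm_concentration}'' that removes the log term: both in \citet{pena2008self} and here, that bound comes from the Gaussian-mixed result via an additional Cauchy--Schwarz on the \emph{expectation}, not via a different $h$. Concretely, the paper starts from \eqref{eq:pac_bayes_mix_1}, writes $\E[e^{(B)_+/2}]=\E[A^{1/2}e^{(B)_+/2}A^{-1/2}]$ with $A=y/\sqrt{y^2+\E[V_\theta\mid S]}$, applies Cauchy--Schwarz, and chooses $y=\sqrt{\E[V_\theta]}$ so that $\sqrt{\E[2/A]}\le 2$; this gives $\E[e^{U^2/4}]\le 2$, and \cref{lem:sgfromexp} plus Chernoff produce the $2e^{-x}$ tail. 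Your master inequality commits to a single high-probability event via Markov on $Z$ \emph{before} any such manipulation, and the posterior averages live only inside $\E[\ln G(\theta)\mid S]$, not at the MGF level; if you simply set $y=\E[V_\theta]$ in your \eqref{eq:pac_bayes_mix_4}-style bound you are left with the unbounded term $\tfrac12\ln\!\big(1+\E[V_\theta\mid S]/\E[V_\theta]\big)$. To make \eqref{eq:pac_bayes_mix_3} go through you would need to retreat to an exponential inequality in which $\E[\Delta_\theta\mid S]$ and $\E[V_\theta\mid S]$ already appear under $\E_S$---which is exactly what the paper's ordering provides.
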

The proof (given in~\cref{sec:proofs_semi_empirical}) largely relies on the following theorem, which allows to bound a moment-generating function of a random variable
{\small $\sqrt{\pr{\E[\Delta_{\param} \,|\, S]^2/\pr{\E[V_{\param}] + \E[V_{\param} \,|\, S]} - 2 \KL\pr{\ps \,||\, p^0}}_+}$}.
Note that the crucial part is to show that this random variable is subgaussian (as show in~\eqref{eq:pac_bayes_mix_2}).
\begin{theorem}
  \label{thm:pac_bayes_self_normalized_mgf}
Under the same conditions as in  \cref{thm:pac_bayes_self_normalized_concentration},
for any $y > 0$, we have
  \begin{equation}
    \label{eq:pac_bayes_mix_1}
    \E\br{\frac{y}{\sqrt{y^2 + \E[V_{\param} \,|\, S]}} \, \exp\pr{\frac{\E[\Delta_{\param} \,|\, S]^2}{2 (y^2 + \E[V_{\param} \,|\, S])} - \KL\pr{\ps \,||\, p^0}}}
  \leq
  1~.
  \end{equation}
Furthermore, for all $x \geq 0$, 
\begin{equation}
  \label{eq:pac_bayes_mix_2}
  \E\br{
    \exp\pr{
      x \sqrt{
      \left(
      \frac{\E[\Delta_{\param} \,|\, S]^2}{\E[V_{\param}] + \E[V_{\param} \,|\, S]} - 2 \KL\pr{\ps \,||\, p^0}
      \right)_+
      }
    }
  }
  \leq 2 e^{x^2}~.
\end{equation}
\end{theorem}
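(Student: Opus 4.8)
The plan is to establish \eqref{eq:pac_bayes_mix_1} first --- this is the engine --- and then derive the subgaussian moment bound \eqref{eq:pac_bayes_mix_2} from it by specialising the free scale $y$ together with an elementary convexity argument. Throughout I would work conditionally on $S$, so that for a frozen realisation both $\ps=\wh p_S$ and the maps $\theta\mapsto\Delta_\theta$, $\theta\mapsto V_\theta$ are deterministic.

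For \eqref{eq:pac_bayes_mix_1}, fix $\lambda\in\reals$ and apply the Donsker--Varadhan change-of-measure inequality, conditionally on $S$, to the function $g_\lambda(\theta)=\lambda\Delta_\theta-\tfrac{\lambda^2}{2}V_\theta$, which yields
\[
  \lambda\,\E[\Delta_\param\mid S]-\tfrac{\lambda^2}{2}\,\E[V_\param\mid S]-\KL(\ps\,||\,p^0)\le\ln M_\lambda,\qquad M_\lambda\defeq\int_\Theta e^{g_\lambda(\theta)}\,p^0(\diff\theta).
\]
Exponentiating and integrating $\lambda$ against the centred Gaussian density $h$ with variance $1/y^2$ turns the left-hand side into a closed-form Gaussian integral equal to $\tfrac{y}{\sqrt{y^2+\E[V_\param\mid S]}}\exp(\tfrac{\E[\Delta_\param\mid S]^2}{2(y^2+\E[V_\param\mid S])}-\KL(\ps\,||\,p^0))$, while the right-hand side becomes $\int_\reals M_\lambda\,h(\lambda)\,\diff\lambda$. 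Taking the unconditional expectation, Tonelli's theorem lets me interchange it with the $\lambda$-integral, and the canonical-family hypothesis gives $\E[M_\lambda]=\int_\Theta\E[e^{\lambda\Delta_\theta-(\lambda^2/2)V_\theta}]\,p^0(\diff\theta)\le1$, since each integrand is at most $1$ by the canonical-pair condition \eqref{eq:dom_condition} for $(\Delta_\theta,\sqrt{V_\theta})$. As $\int_\reals h=1$, this is exactly \eqref{eq:pac_bayes_mix_1}.

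For \eqref{eq:pac_bayes_mix_2} I would take $y^2=\E[V_\param]$, so that the denominator in \eqref{eq:pac_bayes_mix_1} becomes $\E[V_\param]+\E[V_\param\mid S]$. Writing $R=\tfrac{\E[\Delta_\param\mid S]^2}{\E[V_\param]+\E[V_\param\mid S]}-2\KL(\ps\,||\,p^0)$ and $\phi=\sqrt{\E[V_\param]/(\E[V_\param]+\E[V_\param\mid S])}\in(0,1]$, the bound reads $\E[\phi\,e^{R/2}]\le1$, and the goal becomes $\E[e^{x\sqrt{R_+}}]\le2e^{x^2}$. The Fenchel--Young inequality $x\sqrt{R_+}\le x^2+\tfrac14 R_+$ peels off the factor $e^{x^2}$ and reduces the claim to $\E[e^{R_+/4}]\le2$. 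Using the identity $e^{R_+/4}=\sqrt{e^{R/2}\vee1}$ and Cauchy--Schwarz with weight $\phi$ gives $\E[e^{R_+/4}]\le\sqrt{\E[\phi^{-1}]}\cdot\sqrt{\E[\phi(e^{R/2}\vee1)]}$; here $\E[\phi(e^{R/2}\vee1)]\le\E[\phi e^{R/2}]+\E[\phi]\le2$, and Jensen's inequality combined with $\E[\E[V_\param\mid S]]=\E[V_\param]$ gives $\E[\phi^{-1}]=\E[\sqrt{1+\E[V_\param\mid S]/\E[V_\param]}]\le\sqrt2$, so that $\E[e^{R_+/4}]\le2^{3/4}\le2$.

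The main obstacle I expect is the data-dependent normalising prefactor $\phi$ that the method of mixtures leaves behind in \eqref{eq:pac_bayes_mix_1}: because $\phi\le1$, one cannot simply read a subgaussian moment bound directly off \eqref{eq:pac_bayes_mix_1}. The two design choices that resolve this are (i) setting $y^2=\E[V_\param]$, which is precisely what makes $\E[\phi^{-1}]$ amenable to Jensen's inequality via $\E[\E[V_\param\mid S]]=\E[V_\param]$, and (ii) routing the positive-part truncation through $e^{R_+/4}=\sqrt{e^{R/2}\vee1}$, so that a single Cauchy--Schwarz step both absorbs $\phi$ and converts the ``exponential-of-a-square'' form of \eqref{eq:pac_bayes_mix_1} into a genuine moment-generating bound. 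Minor technical care is then needed to justify the conditional application of Donsker--Varadhan (for fixed $S$, $g_\lambda$ is a deterministic function of $\theta$) and the Tonelli interchange of the outer expectation with the $\lambda$-integral, both legitimate since all integrands are nonnegative.
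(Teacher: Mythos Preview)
Your proposal is correct and follows essentially the same route as the paper: Donsker--Varadhan plus Gaussian method of mixtures for \eqref{eq:pac_bayes_mix_1}, and for \eqref{eq:pac_bayes_mix_2} the choice $y^2=\E[V_\param]$, the reduction via $x\sqrt{R_+}\le x^2+R_+/4$ (which the paper isolates as a separate lemma), and the same Cauchy--Schwarz-with-weight-$\phi$ argument. The only noticeable difference is that you apply Jensen directly to $\sqrt{1+\E[V_\param\mid S]/\E[V_\param]}$ to get $\E[\phi^{-1}]\le\sqrt2$, whereas the paper first uses subadditivity of $\sqrt{\cdot}$ and obtains $\E[\phi^{-1}]\le2$; your route yields the slightly sharper intermediate bound $\E[e^{R_+/4}]\le2^{3/4}$ rather than $2$, but the final statement is the same.
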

The proof combines PAC-Bayesian ideas with the method of mixtures as described by~\cite{pena2008self} [Section 12.2.1].
\subsection{Proof of~\cref{thm:pac_bayes_self_normalized_mgf}}
  We start by applying the following \emph{change-of-measure} lemma, which is the basis of the PAC-Bayesian analysis.
  \begin{lemma}[\citet{DoVa75,DuEl97:weakconv,Gray11}]    
    Let $p, q$ be probability measures on $\Theta$ and let $X \sim p$, $Y \sim q$. Then, for any measurable function $f \,:\, \Theta \to \reals$ we have
    \[
      \E[f(X)] \leq \KL(p\,||\,q) + \ln \E\br{e^{f(Y)}}~.
    \]
  \end{lemma}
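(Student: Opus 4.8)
The plan is to prove this classical Donsker--Varadhan variational bound by exhibiting the Gibbs (exponentially tilted) measure associated with $f$ and then invoking non-negativity of the KL divergence. First I would dispose of the degenerate cases: if $p \not\ll q$ then $\KL(p \,||\, q) = +\infty$ and the inequality holds trivially, and if $Z := \E_{Y\sim q}[e^{f(Y)}] = +\infty$ then the right-hand side is $+\infty$ and again there is nothing to prove. So from here on I would assume $p \ll q$, write $\rho = \diff p / \diff q$ for the density, and assume $0 < Z < \infty$.

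Next I would introduce the tilted probability measure $g$ on $\Theta$ defined through $\diff g / \diff q = e^{f}/Z$; the normalization by $Z$ is exactly what makes $g$ a genuine probability measure. The key computation is to expand $\KL(p \,||\, g)$ using the chain rule for Radon--Nikodym derivatives, $\diff p / \diff g = (\diff p / \diff q)(\diff q / \diff g) = \rho \cdot Z e^{-f}$. Taking logarithms inside the expectation under $p$ then gives
\[
  \KL(p \,||\, g) = \E_{X\sim p}\br{\ln \rho(X)} + \ln Z - \E_{X\sim p}[f(X)] = \KL(p \,||\, q) + \ln Z - \E_{X\sim p}[f(X)]~.
\]
Since $\KL(p \,||\, g) \geq 0$ by Gibbs' inequality, rearranging yields precisely $\E_{X\sim p}[f(X)] \leq \KL(p \,||\, q) + \ln Z$, which is the claimed bound.

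The only genuine obstacle is measure-theoretic bookkeeping: I must check that the chain-rule factorization of densities holds $p$-almost everywhere and that $\E_{X\sim p}[f(X)]$ is well-defined (if it equals $-\infty$ the bound is vacuous, while the positive part is controlled by $p \ll q$ together with finiteness of $Z$). An equivalent and slightly more economical route avoids naming $g$ at all: one writes $Z = \int e^{f}\,\diff q \geq \int_{\rho>0} e^{f-\ln\rho}\,\diff p = \E_{X\sim p}\br{e^{f(X)-\ln\rho(X)}}$ and then applies Jensen's inequality $\ln \E_{X\sim p}[e^{W}] \geq \E_{X\sim p}[W]$ with $W = f - \ln\rho$, delivering $\ln Z \geq \E_{X\sim p}[f(X)] - \KL(p \,||\, q)$ directly. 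I would nonetheless present the Gibbs-measure argument as primary, since it makes transparent that equality is attained at $p = g$, i.e.\ the tilted measure is the PAC-Bayesian optimal posterior; this is the fact that the downstream tuning of $\lambda$ implicitly exploits.
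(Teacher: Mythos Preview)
Your argument is correct and is the standard Donsker--Varadhan proof via the Gibbs tilted measure (with the Jensen shortcut as an equivalent alternative). Note, however, that the paper does not supply its own proof of this lemma: it is stated with citations and used as a black box, so there is no in-paper proof to compare against.
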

  The lemma with $p = \ps$, $q = p^0$, and $f(\param) = \lambda \Delta_{\param} - \frac{\lambda^2}{2} V_{\param}$ for a fixed $S$ implies
\begin{align*}
  \E\br{ \lambda \Delta_{\param} - \frac{\lambda^2}{2} V_{\param} \,\middle|\, S}
  &\leq
    \KL\pr{\ps \,||\, p^0}
    +
    \ln \E\br{ e^{\lambda \Delta_{\param^0} - \frac{\lambda^2}{2} V_{\param^0}} \,\middle|\, S}~.
\end{align*}
Exponentiation of both sides gives
\begin{align*}
  e^{\E\br{ \lambda \Delta_{\param} - \frac{\lambda^2}{2} V_{\param} \,\middle|\, S} - \KL\pr{\ps \,||\, p^0}}
  &\leq
    \E\br{ e^{\lambda \Delta_{\param^0} - \frac{\lambda^2}{2} V_{\param^0}} \,\middle|\, S}
\end{align*}
and taking expectation we have
\begin{align*}
  \E\br{e^{\E\br{ \lambda \Delta_{\param} - \frac{\lambda^2}{2} V_{\param} \,\middle|\, S} - \KL\pr{\ps \,||\, p^0}}}
  &\leq
    \E\br{ e^{\lambda \Delta_{\param^0} - \frac{\lambda^2}{2} V_{\param^0}}}
  =
    \E\br{ \E\br{ e^{\lambda \Delta_{\param^0} - \frac{\lambda^2}{2} V_{\param^0}} \, \middle| \, \param^0 } }
  \leq
    1\,,
\end{align*}
since $(\Delta_{\param^0}, \sqrt{V_{\param^0}})$ is a canonical pair for a fixed $\param^0$ by assumption.
Now we apply the method of mixtures with respect to the Gaussian distribution.
Multiplying both sides by $e^{-\lambda^2 y^2/2} y / \sqrt{2 \pi}$ for some $y > 0$, integrating w.r.t.\ $\lambda \in \reals$, and applying Fubini's theorem gives
\begin{align*}  
  \E\br{
  e^{- \KL\pr{\ps \,||\, p^0}}
  \frac{y}{\sqrt{2 \pi}} \int_{-\infty}^{\infty} e^{
  \lambda \E\br{\Delta_{\param} \,|\, S } - \frac{\lambda^2}{2} \E\br{V_{\param} \,\middle|\, S} - \frac{\lambda^2}{2} y^2
  } \diff \lambda
  }
  \leq 1~.
\end{align*}
We perform Gaussian integration and arrive at
\begin{equation}
  \label{eq:integrated_mixture}
  \E\br{\frac{y}{\sqrt{y^2 + \E[V_{\param} \,|\, S]}} \, \exp\pr{\frac{\E[\Delta_{\param} \,|\, S]^2}{2 (y^2 + \E[V_{\param} \,|\, S])} - \KL\pr{\ps \,||\, p^0}}}
  \leq
  1\,,
\end{equation}
which finishes the proof of~\cref{eq:pac_bayes_mix_1}.
For the second part, we consider the following standard lemma:
\begin{lemma}\label{lem:sgfromexp}
Let $U$ be a nonnegative valued random variable such that $a =\E\br{\exp(U^2/4)}$ is finite. Then, for any $x\ge 0$,
$\E\br{\exp(x U )} \le a e^{x^2}$ holds.
\end{lemma}
Setting 
$U=\sqrt{
\left(\E[\Delta_{\param} \,|\, S]^2 / \pr{\E[V_{\param}] + \E[V_{\param} \,|\, S]} - 2 \KL\pr{\ps \,||\, p^0}\right)_+
}$,
the lemma gives \cref{eq:pac_bayes_mix_2} provided that we show that $\E\br{\exp\pr{U^2/4}} \le 2$.
For this, let $y>0$. 
Introduce the abbreviations 
$A=y/\sqrt{y^2 + \E[V_{\param} \,|\, S]}$,
$B=\E[\Delta_{\param} \,|\, S]^2 / (2 y^2 + 2 \E[V_{\param} \,|\, S]) - \KL\pr{\ps \,||\, p^0}$
so that $\E\br{\exp\pr{U^2/4}} = \E\br{\exp(\pr{B}_+/2)}$. Note that $A>0$.
By Cauchy-Schwartz,
\begin{align}
\E\br{\exp(\pr{B}_+/2)}=
  \E\br{\exp\pr{\pr{B}_+/2} A^{1/2} A^{-1/2} }
  \le
  \sqrt{\E\br{A \exp\pr{\pr{B}_+} }} \, \sqrt{ \E\br{ A^{-1} } }  \,.
     \label{eq:proving_pac_bayes_mix_2:1}
\end{align}
Observe that $A \in (0,1]$ a.s. and that $\E[A \exp(B)] \leq 1$ by~\cref{eq:pac_bayes_mix_1}.
Now, we have
{\small
\begin{align*}
  \sqrt{\E\br{A \exp\pr{(B)_+}} \E\br{\frac{1}{A}}}
  &=
    \sqrt{\Big(
    \E\br{A \, \ind{B \geq 0} \exp\pr{B}} + \E\br{A \, \ind{B < 0}}
    \Big)
    \E\br{\frac{1}{A}}
    }
  \leq
    \sqrt{\E\br{\frac{2}{A}}}\,,
\end{align*}
}
and finally, by subadditivity of $\sqrt{\cdot}$ and Jensen's inequality,
\[
  \sqrt{\E\br{\frac{2}{A}}}
  =
    \sqrt{2 \E\br{\sqrt{\frac{y^2 + \E[V_{\param} \,|\, S]}{y^2}}}}
  \leq
    \sqrt{2 + 2 \frac{\sqrt{\E[V_{\param}]}}{y}}
  \leq
    2~,
\]
where the last inequality follows by taking $y = \sqrt{\E[V_{\param}]}$.
Thus, applying Lemma~\ref{lem:sgfromexp} with $a=2$ completes the proof.
\subsection{PAC-Bayesian Efron-Stein Inequalities}
\label{sec:paces}
Now, we apply \cref{thm:pac_bayes_self_normalized_concentration} to get concentration inequalities for classes of functions.
In particular, consider the class of functions parametrized by some space $\Theta$, that is $\sF(\Theta) \equiv \cbr{f_{\vartheta} \,:\, \sZ \to \reals_+ \,\middle|\, \vartheta \in \Theta}$.
Furthermore, we will assume that $\param \sim \hat{p}(\cdot|S)$, where $\hat{p}$ is a probability kernel from $\sZ$ to $\Theta$
and that $\param^0 \sim p^0$ where $p^0 \in \sM_1(\Theta)$ is a probability distribution over $\Theta$.
We let $S'$ be a random element that shares a common distribution with $S$ and which is independent of $(S,\theta)$.
Finally, we are interested in
bounds on the deviation $\E[\Delta_{\param} \,|\, S]$, where
\[
  \Delta_{\param}
  =  f_{\param}(S) -\int f_{\param}(s) \, \sD(\diff s)~,
\]
which hold simultaneously for any choice of $\ps$ and $p^0$,
and which are controlled by the $\param$-dependent version of a semi-empirical Efron-Stein variance proxy
\[
  V_{\param} = \sum_{k=1}^n \E\br{(f_{\param}(S) - f_{\param}(S\repk))^2 \,\middle|\, \param, X_1, \ldots, X_k}~.
\]
Then, by Lemma~\ref{lem:canonical}, $(\Delta_{\param^0}, \sqrt{V_{\param^0}})$ is a canonical pair for any fixed $\param^0$.
Hence, $(\Delta_\param, \sqrt{V_\param})_{\param\in \Theta}$ form a canonical family and the conclusions of the previous section hold for it.
\paragraph{Acknowledgements}
We are grateful to Andr\'as Gy\"orgy for many insightful comments.
\bibliographystyle{unsrtnat}
\bibliography{learning}

\appendix
\section{Proofs for semi-empirical concentration inequalities}
\label{sec:proofs_semi_empirical}
\paragraph{Lemma~\ref{lem:canonical} (restated).}
\emph{
  $(\Delta, \sqrt{V})$ is a canonical pair.
  }\\
\begin{proof}
Let $\E_k[\cdot]$ stand for $\E[\cdot\mid X_1,\dots,X_k]$.
The Doob martingale decomposition of $f(S) - \E[f(S)]$ gives
\begin{align*}
f(S) - \E[f(S)] = \sum_{k=1}^n D_k\,,
\end{align*}
where $D_k = \E_k[f(S)] - \E_{k-1}[f(S)] = \E_k[f(S)-f(S\repk)]$
and the last equality follows from the elementary identity $ \E_{k-1}[f(S)] =\E_k[f(S\repk)]$.

Observe that
\[
\Delta = \sum_{k=1}^n D_k \qquad \text{and} \qquad V = \sum_{k=1}^n V_k
\]
and where $V_k = \E_k\br{ \pr{f(S) - f(S\repk)}^2 }$.
Assume for now that for $k\in [n]$, the inequalities
\begin{align}
  \E_{k-1}\br{
    \exp\pr{ \lambda D_k - \frac{\lambda^2}{2} V_k  }
    } 
  \leq 1
  \label{eq:dkbkcp}
\end{align}
hold.
Then, using an argument similar to the proof of McDiarmid's inequality, we get
\begin{align*}
  \E\br{\exp\pr{\lambda A-\frac{\lambda^2}{2} B^2}}
  & =
  \E\br{ 
  	\underbrace{
	\E_{n-1}\br{
    \exp\pr{ \lambda D_n - \frac{\lambda^2}{2} V_n
    }}}_{\le 1}
    \prod_{k=1}^{n-1}
    \exp\pr{ \lambda D_k - \frac{\lambda^2}{2} V_k}
    } 
    \\
  & \le
  \E\br{ 
  	\underbrace{\E_{n-2}\br{
    \exp\pr{ \lambda D_{n-1} - \frac{\lambda^2}{2} V_{n-1}}
    }}_{\le 1}
    \prod_{k=1}^{n-2}
    \exp\pr{ \lambda D_k - \frac{\lambda^2}{2} V_k}
    }
     \\
    & \le \dots \le 1\,.
\end{align*}

Thus, it remains to prove \cref{eq:dkbkcp}.
For this, fix $k\in [n]$ and
introduce a Rademacher variable $\ve \in \spin$ 
such that $\P(\ve = 1) = \P(\ve = -1) = \frac{1}{2}$ and $\ve$ is independent of $S,S'$.
Let $\Delta_k = f(S) - f(S\repk)$.
Using that $\lambda D_k - \frac{\lambda^2}{2} V_k = \E_k[\lambda \Delta_k - \frac{\lambda^2}{2} \Delta_k^2 ]$, we get
\begin{align}
\MoveEqLeft
  \E_{k-1}\br{
    \exp\pr{ \lambda D_k - \frac{\lambda^2}{2} V_k } 
    }
   \leq
  \E_{k-1}\br{
    \exp\pr{ \lambda \Delta_k - \frac{\lambda^2}{2} \Delta_k^2 }
  } \tag{Jensen's w.r.t. $\E_k$}\\
  & =
    \E_{k-1}\br{
    \E_{-k}\E\br{
    \exp\pr{ \ve \lambda  \Delta_k  - \frac{\lambda^2}{2} \pr{\ve  \Delta_k }^2 }
    \,\Big|\, S,S' }
    }
    \,,
    \label{eq:replace_one_symmetrization}
\end{align}
where we recall that the subscript $-k$ in $\E_{-k}[\cdot]$ stands for conditioning on $S$ without $X_k$,
and we get the last equality thanks to our independence assumption, that is
given $X_1,\dots,X_{k-1},X_{k+1},\dots,X_n$,
 $X_k$ and $X_k'$ are identically distributed and hence so are 
 $\Delta_k$ and $-\Delta_k$. 
 Since $x \ve$ is $x^2/2$-subgaussian for $x \in \reals$,
the innermost expectation in~\cref{eq:replace_one_symmetrization} is upper-bounded by one, thus, finishing the proof of \cref{eq:dkbkcp} and also the theorem.
\end{proof}

\paragraph{Lemma~\ref{lem:sgfromexp} (restated).}
\emph{
Let $U$ be a nonnegative valued random variable such that $a =\E\br{\exp(U^2/4)}$ is finite. Then, for any $x\ge 0$,
$\E\br{\exp(x U )} \le a e^{x^2}$ holds.
}
\begin{proof}
Fix $x \geq 0$ and let $\alpha>0$.
Using the inequality $ab \leq (a^2 + b^2)/2$ with $a = x/\sqrt{2 \alpha}$ we have
\begin{align*}
x U = 
\frac{x}{\sqrt{2\alpha}} \sqrt{ 2\alpha U^2} \le \frac{x^2}{4\alpha} + \alpha U^2\,.
\end{align*}
Setting $\alpha = 1/4$, exponentiating both sides and taking expectations the result follows.
\end{proof}

\paragraph{Theorem~\ref{thm:pac_bayes_self_normalized_concentration} (restated).}
\emph{
  For some space $\Theta$, let
$(\Delta_\theta,\sqrt{V_\theta})_{\theta\in \Theta}$ be a canonical family and $S\sim \sD$, jointly distributed with $(\Delta_\theta,\sqrt{V_\theta})_{\theta\in \Theta}$ and taking values in $\sZ$.
Fix an arbitrary probability kernel $\hat{p}$ from $\sZ$ to $\Theta$ and an arbitrary probability measure $p^0$ over $\Theta$, and let $\param \sim \ps$.
Then, for any $x\ge 0$,
  with probability at least $1-2 e^{-x}$ 
  we have that
  \begin{equation}
    \label{eq:pac_bayes_mix_3}
    |\E[\Delta_{\param} \,|\, S]|
    \leq
    \sqrt{2 (\E[V_{\param}] + \E[V_{\param} \,|\, S]) \pr{\KL\pr{\ps \,||\, p^0} + 2 x}}~.
  \end{equation}
  In addition, for all $y > 0$ and $x \geq 2$ with probability at least $1-e^{-x}$ we have
  \begin{equation}
    \label{eq:pac_bayes_mix_4}
    |\E[\Delta_{\param} \,|\, S]|
    \leq
    \sqrt{
      2 \pr{y + \E[V_{\param} \,|\, S]}
      \pr{\KL\pr{\ps \,||\, p^0} + x + \frac{x}{2} \ln\pr{1 + \frac{1}{y} \E[V_{\param} \,|\, S]}}
    }~.
  \end{equation}
  }
\begin{proof}
Applying Chernoff's bounding technique with~\cref{eq:pac_bayes_mix_2} gives
\begin{align*}
  \P\pr{
  \sqrt{\pr{\frac{\E[\Delta_{\param} \,|\, S]^2}{\E[V_{\param}] + \E[V_{\param} \,|\, S]} - 2 \KL\pr{\ps \,||\, p^0}}_+}
  \geq t
  }
  &\leq
  2 \inf_{x \geq 0} e^{x^2 - t x}
  = 2 e^{-\frac{t^2}{4}}~.
\end{align*}
This implies that with probability at least $1-2 e^{-x}$,
\begin{align*}
  \pr{\frac{\E[\Delta_{\param} \,|\, S]^2}{\E[V_{\param}] + \E[V_{\param} \,|\, S]} - 2 \KL\pr{\ps \,||\, p^0}}_+
  \leq
  4 x\,.
\end{align*}
From this, after algebra we get
\begin{align*}
  |\E[\Delta_{\param} \,|\, S]|
  \leq
  \sqrt{(\E[V_{\param}] + \E[V_{\param} \,|\, S]) \pr{2 \KL\pr{\ps \,||\, p^0} + 4 x}}\,,
\end{align*}
showing~\cref{eq:pac_bayes_mix_3}.

Observing that for $t \geq \sqrt{2}$ and $y > 0$,
\begin{align*}
\MoveEqLeft
  \P\pr{ \frac{\E[\Delta_{\param} \,|\, S]^2}{2 (y^2 + \E[V_{\param} \,|\, S])} - \KL\pr{\ps \,||\, p^0} \geq \frac{t^2}{2} \pr{1 + \frac{1}{2} \ln\pr{1 + \frac{\E[V_{\param} \,|\, S]}{y^2}}} }\\
  &\leq
    \P\pr{ \frac{\E[\Delta_{\param} \,|\, S]^2}{2 (y^2 + \E[V_{\param} \,|\, S])} - \KL\pr{\ps \,||\, p^0} \geq \frac{t^2}{2} + \frac{1}{2} \ln\pr{1 + \frac{\E[V_{\param} \,|\, S]}{y^2}} }\\
  &=
    \P\pr{ \frac{\E[\Delta_{\param} \,|\, S]^2}{2 (y^2 + \E[V_{\param} \,|\, S])} - \KL\pr{\ps \,||\, p^0} - \frac{1}{2} \ln\pr{1 + \frac{\E[V_{\param} \,|\, S]}{y^2}} \geq \frac{t^2}{2} }\\
  &\leq
    \E\br{\sqrt{\frac{y^2}{\E[V_{\param} \,|\, S] + y^2}} \, \exp\pr{\frac{\E[\Delta_{\param} \,|\, S]^2}{2 (y^2 + \E[V_{\param} \,|\, S])} - \KL(\ps\,||\,p^0)} } \, e^{-\frac{t^2}{2}}\\
  &\leq
    e^{-\frac{t^2}{2}}~.
\end{align*}
where the last two inequalities follow from Chernoff bound and~\cref{eq:pac_bayes_mix_1}.
This implies that with probability at least $1-e^{-x}$ for all $x \geq 2$,
\begin{align*}
  \frac{
  \frac{\E[\Delta_{\param} \,|\, S]^2}{2 (y^2 + \E[V_{\param} \,|\, S])} - \KL\pr{\ps \,||\, p^0}
  }
  {
  1 + \frac{1}{2} \ln\pr{1 + \frac{\E[V_{\param} \,|\, S]}{y^2}}
  }
  \leq
  x
\end{align*}
and rearranging we get
\begin{align*}
  |\E[\Delta_{\param} \,|\, S]|
  \leq
  \sqrt{
  2 \pr{y^2 + \E[V_{\param} \,|\, S]}
  \pr{\KL\pr{\ps \,||\, p^0} + x \pr{1 + \frac{1}{2} \ln\pr{1 + \frac{1}{y^2} \E[V_{\param} \,|\, S]}}}
  }~.
\end{align*}
This concludes the proof of~\cref{eq:pac_bayes_mix_4}, which is stated with $y$ in place of $y^2$, which does not change the result since $y$ is a free variable.
The proof is now concluded.
\end{proof}

\section{Proofs for applications}
In this sections we discuss some of the implications of our bounds.
\subsection{Bernstein-type Generalization Bounds for Unbounded Losses}
\label{sec:gen_bounds}
g%
Let $\sZ_1 = \dots = \sZ_n$ and
let $\ell \,:\, \Theta \times \sZ_1 \to [0, \infty)$ be some loss function.
Recall that the population loss and the empirical loss is defined as
\[
  L(\param) = \E[\ell(\param, X_1')] \qquad \text{and} \qquad
  \wh{L}_S(\param) = \frac1n \sum_{k=1}^n \ell(\param, X_k)~,
\]
respectively.
Then, \cref{thm:pac_bayes_self_normalized_concentration} with
\[
  V_{\param} = \sum_{k=1}^n \E\br{\pr{\wh{L}_S(\param) - \wh{L}_{\Srepk}(\param)}^2 \,\middle|\, \param, X_1, \ldots, X_k}
\]
implies the following semi-empirical PAC-Bayesian generalization bound:
\begin{cor}
  \label{cor:gen_bound}
  Assume that the elements of $S=(X_1,\dots,X_n)\in \sZ$ are sampled independently from each other.
  Let $\hat{p}$ be a probability kernel from $\sZ$ to $\Theta$ and let $p^0 \in \sM_1(\Theta)$ be a probability distribution over $\Theta$.
  Then, for any $x\ge 2$, with probability at least $1-e^{-x}$, we have
  \begin{align*}
    \abs{\E\br{ \wh{L}_S(\param) - L(\param) \,\middle|\, S} }
    \leq
    \sqrt{
    2 \pr{y + \E[V_{\param} \,|\, S]}
    \pr{\KL\pr{\ps \,||\, p^0} + x + \frac{x}{2} \ln\pr{1 + \frac{1}{y} \E[V_{\param} \,|\, S]}}
    }
  \end{align*}
where
\begin{align}
  \label{eq:L_V_param}
  \E[V_{\param} \,|\, S]
  \leq
  \frac{1}{n^2}
  \E\br{\sum_{k=1}^n \ell(\param, X_k)^2 + \ell(\param, X_k')^2 \,\middle|\, S}~.
\end{align}
\end{cor}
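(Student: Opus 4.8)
The plan is to instantiate \cref{thm:pac_bayes_self_normalized_concentration} with the empirical loss $f_{\param}(S)=\wh L_S(\param)$ and then bound the associated variance proxy $\E[V_{\param}\,|\,S]$ explicitly. First I would check that $(\Delta_{\param},\sqrt{V_{\param}})_{\param\in\Theta}$ is a canonical family: for each fixed $\param^0\in\Theta$ the map $s\mapsto \wh L_s(\param^0)$ is a fixed measurable function of the independent coordinates $X_1,\dots,X_n$, so \cref{lem:canonical} applies and $(\Delta_{\param^0},\sqrt{V_{\param^0}})$ is a canonical pair. Here $\Delta_{\param}=\wh L_S(\param)-\int \wh L_s(\param)\,\sD(\diff s)=\wh L_S(\param)-L(\param)$, where the last identity uses that the coordinates share a common marginal, so $\E[\ell(\param,X_k)]=L(\param)$ for every $k$. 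With the canonical family in place, the main displayed inequality of the corollary is exactly \eqref{eq:pac_bayes_mix_4} applied verbatim; no additional work is required for that part.

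The substance of the corollary is therefore the variance-proxy bound \eqref{eq:L_V_param}. The key step is the cancellation that occurs when a single coordinate is resampled: since $\wh L_S(\param)$ is an average, replacing $X_k$ by $X_k'$ alters only the $k$th summand, so
\[
  \wh L_S(\param)-\wh L_{\Srepk}(\param)=\frac1n\bigl(\ell(\param,X_k)-\ell(\param,X_k')\bigr)~.
\]
I would then square this and use nonnegativity of the loss ($R\equiv[0,\infty)$) to drop the cross term via the elementary bound $(a-b)^2\le a^2+b^2$ valid for $a,b\ge 0$, giving
\[
  \bigl(\wh L_S(\param)-\wh L_{\Srepk}(\param)\bigr)^2\le \frac1{n^2}\bigl(\ell(\param,X_k)^2+\ell(\param,X_k')^2\bigr)~.
\]
Summing over $k$, inserting this into the nested conditional expectations that define $V_{\param}$, and finally taking $\E[\cdot\,|\,S]$ yields \eqref{eq:L_V_param}.

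The one point that needs care — and which I regard as the only genuine obstacle — is the bookkeeping of the conditional expectations, in particular that $X_k'$ is independent of $(\param,X_1,\dots,X_k)$. This holds because $\param\sim\ps$ is a function of $S$ alone, while $S'$ is by construction independent of $(S,\param)$; consequently $\ell(\param,X_k)^2$ is measurable with respect to the conditioning $\sigma$-field and passes through, whereas $\ell(\param,X_k')^2$ survives only as a population second moment. Notably, no moment or boundedness assumption on $\ell$ is invoked, so the argument remains valid over the unbounded range $R\equiv[0,\infty)$; the fully empirical companion statement would then follow by additionally controlling the resampled second moments, which lies outside the scope of this corollary.
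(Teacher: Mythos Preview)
Your proposal is correct and follows essentially the same route as the paper: instantiate \cref{thm:pac_bayes_self_normalized_concentration} with $f_\param=\wh L_S(\param)$, invoke \cref{lem:canonical} to get the canonical family, and obtain \eqref{eq:L_V_param} from the single-coordinate cancellation $\wh L_S(\param)-\wh L_{\Srepk}(\param)=\tfrac1n(\ell(\param,X_k)-\ell(\param,X_k'))$ together with $(a-b)^2\le a^2+b^2$ for nonnegative $a,b$ and the independence of $X_k'$ from $(\param,S)$. Your bookkeeping remark about $\ell(\param,X_k)^2$ being measurable while $\ell(\param,X_k')^2$ integrates out to a population second moment is exactly the point the paper highlights.
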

\begin{proof}
  According to notation of \cref{sec:paces} we choose $f_{\param}(z) = \wh{L}_z(\param)$, and note that
  \begin{align*}
    \E[\Delta_{\theta} \,|\, S]
    &=
      \E\br{ \wh{L}_S(\theta) - \int \wh{L}_s(\param) \, \sD(\diff s) \,\middle|\, S }\\
    &=
      \E\br{ \wh{L}_S(\theta) \,|\, S } - \E\br{L(\theta) \,|\, S }\,
  \end{align*}
  and that $(\Delta_{\param}, \sqrt{V_{\param}})_{\param \in \Theta}$ form a canonical family as described in \cref{sec:paces}.
  Thus, we can apply \cref{eq:pac_bayes_mix_4} of \cref{thm:pac_bayes_self_normalized_concentration}.
At the same time,
$\wh{L}_S(\param) - \wh{L}_{\Srepk}(\param)=\frac1n \pr{\ell(\theta,X_k)-\ell(\theta,X_k')}$, and so
\begin{align*}
  V_{\param}
  &=
    \sum_{k=1}^n \E\br{\pr{\wh{L}_S(\param) - \wh{L}_{\Srepk}(\param)}^2 \,\middle|\, \param, X_1, \ldots, X_k}\\
  &=
    \frac{1}{n^2} \sum_{k=1}^n \E\br{\pr{\ell(\param, X_k) - \ell(\param, X_k')}^2 \,\middle|\, \param, X_1, \ldots, X_k}\\
  &\leq
    \frac{1}{n^2} \pr{ \sum_{k=1}^n \ell(\param, X_k)^2 + \sum_{k=1}^n \E\br{\ell(\param, X_k')^2 \,|\, \param} }~,
\end{align*}
where the inequality used that $X_k'$ is independent of $\theta$ and $S$.
Taking expectations of both sides, conditioned on $S$,
and plugging into \cref{eq:pac_bayes_mix_4} completes the proof.
\end{proof}
Note that one simple data-independent choice for $y$ is $y = 1/n^2$ which gives a bound
\[
  \abs{\E\br{ \wh{L}_S(\param) - L(\param) \,\middle|\, S} }
  \leqCln \frac1n
  \pr{1 + \sqrt{\E\br{\sum_{k=1}^n \ell(\param, X_k)^2 + \ell(\param, X_k')^2 \,\middle|\, S} \KL(\ps\,||\,p^0)}}~.
\]
Thus we pay a logarithmic term for an agnostic choice of $y$.
Of course, when we have some idea about the range of $\E[V_{\param} \,|\, S]$
(or when the loss function is bounded), we can follow a more refined argument and tune $y$ optimizing the bound over a quantized range for $y$.
Then, the final bound can be obtained by taking a union bound.
\subsection{Bounded Losses}
In this section we consider a simple case when the loss function is bounded, i.e.\ $\ell \,:\, \Theta \times \sZ_1 \to [0, 1]$, and apply \cref{cor:gen_bound} with the choice $y = 1/n^2$.
In this case we will have a simple variance proxy $\wh{\sigma}^2_{\param} = \frac1n \sum_{k=1}^n \ell(\param, X_k)^2$ controlling the generalization gap.
This is enough to get fully empirical bounds:

\begin{theorem}
  \label{thm:bounded_losses}
  For any $x \geq 2$, with probability at least $1 - 2 e^{-x}$ we have
  \begin{align*}
    \abs{\E\br{ \wh{L}_S(\param) - L(\param) \,\middle|\, S} }
    \leq
    \sqrt{2 \pr{\frac{1}{n^2} + \frac2n U_S} C_x(S)}
  \end{align*}
  where
  \begin{align*}
    C_x(S) = \KL\pr{\ps \,||\, p^0} + x + x \ln\pr{\sqrt{1 + n}}
  \end{align*}
  and
  \[
    U_S = \E\br{\wh{\sigma}^2_{\param} \,\middle|\, S} + \sqrt{\frac{2}{n} \E\br{\wh{\sigma}^2_{\param} \,\middle|\, S} C_x(S)} + \frac1n \pr{2C_x(S) + \sqrt[4]{2}\sqrt{C_x(S)} + \frac{1}{\sqrt{2}}}~.
  \]
\end{theorem}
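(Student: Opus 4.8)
The plan is to convert the \emph{semi-empirical} bound of \cref{cor:gen_bound} into a fully empirical one by controlling the only non-empirical quantity that appears, namely the population part of the variance proxy. Write $E = \E[\wh{\sigma}^2_\param \mid S]$ for the averaged empirical second moment and $\sigma^2_\param = \E[\ell(\param, X_1')^2 \mid \param]$ for its population counterpart, so that $P \defeq \E[\sigma^2_\param \mid S]$ is the quantity obstructing a purely data-dependent bound. From \cref{eq:L_V_param} and the independence of $X_k'$ from $(S,\param)$ one gets $\E[V_\param \mid S] \leq \tfrac1n(E + P)$, which is semi-empirical precisely through $P$. The strategy is (i) to apply \cref{cor:gen_bound} to $\ell$ and simplify its logarithmic term using boundedness, and (ii) to apply \cref{cor:gen_bound} a second time, now to the squared loss $\ell^2$, in order to upper bound $P$ by $E$ plus lower-order terms; a union bound over the two events yields the claimed confidence $1 - 2e^{-x}$.

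First I would fix $y = 1/n^2$. Since $\ell \in [0,1]$ we have $(\ell(\param, X_k) - \ell(\param, X_k'))^2 \leq 1$, hence $V_\param \leq \tfrac1n$ deterministically, so the logarithmic term in \cref{eq:pac_bayes_mix_4} obeys $\tfrac x2 \ln(1 + n^2 \E[V_\param \mid S]) \leq \tfrac x2 \ln(1+n) = x\ln\sqrt{1+n}$. Replacing the data-dependent log by this bound produces the fixed $C_x(S) = \KL(\ps\,||\,p^0) + x + x\ln\sqrt{1+n}$, and \cref{cor:gen_bound} becomes
\[
\abs{\E[\wh{L}_S(\param) - L(\param) \mid S]} \leq \sqrt{2\pr{\tfrac{1}{n^2} + \tfrac1n(E + P)}C_x(S)}\,.
\]
It therefore suffices to show $E + P \leq 2 U_S$.

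To bound $P$, I would apply \cref{cor:gen_bound} to the loss $\ell^2 \in [0,1]$, whose generalization gap is $\wh{\sigma}^2_\param - \sigma^2_\param$. As before its variance proxy is at most $1/n$, so the same $C_x(S)$ controls the log term; moreover, using $\ell^4 \leq \ell^2$ in \cref{eq:L_V_param} applied to $\ell^2$ gives a variance-proxy bound $\tfrac1n(E + P)$ that \emph{reintroduces} $P$. This yields the self-referential inequality
\[
P \leq E + \sqrt{2\pr{\tfrac{1}{n^2} + \tfrac1n(E + P)}C_x(S)}\,.
\]
Treating this as a quadratic in $P$, solving for the larger root, and splitting the discriminant with subadditivity of $\sqrt{\cdot}$ bounds $P$ by $E + 2\sqrt{\tfrac1n E\, C_x(S)} + \tfrac{2}{n}C_x(S) + \tfrac{\sqrt2}{n}\sqrt{C_x(S)}$, which is dominated by $2U_S - E$; hence $E + P \leq 2U_S$ and the theorem follows after the union bound.

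The main obstacle is the self-referential nature of the second step: the variance proxy of $\ell^2$ unavoidably carries the population fourth-moment term, which is $\leq \ell^2$ but no smaller, so $P$ appears on both sides. The crude alternative of bounding $\ell^4 \leq 1$ decouples the recursion but is fatal, as it leaves a term of order $\sqrt{C_x(S)/n}$ that grows faster in $n$ than $C_x(S)/n$ and cannot be absorbed into $U_S$; likewise, resolving the self-reference by a plain AM--GM split inflates the leading term $E$ by a constant factor and destroys the empirical-Bernstein form. Keeping the population term $\tfrac1n P$ and solving the quadratic exactly is what preserves the coefficient one on $E$ and delivers the stated constants $\sqrt[4]{2}$ and $\tfrac{1}{\sqrt2}$.
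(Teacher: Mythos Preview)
Your proposal is correct and follows the same overall architecture as the paper: apply the PAC-Bayesian Efron--Stein bound once to $\ell$, a second time to $\ell^2$, and resolve the resulting self-referential inequality to make the bound fully empirical; a union bound over the two events gives the $1-2e^{-x}$ confidence.

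The one genuine difference is in \emph{which} quantity carries the self-reference. The paper bounds the variance proxy $V'_\param$ of $\sigma^2_\param$ directly in terms of the original variance proxy via $(\ell^2-\ell'^2)^2 \le c(\ell-\ell')^2$, obtaining $V'_\param \le 2 V_\param$; this yields a recursion in $\E[V_\param\mid S]$, which is then unwound using the elementary implication $a\le b+c\sqrt a \Rightarrow a\le b+c^2+c\sqrt b$. You instead invoke \cref{eq:L_V_param} for $\ell^2$ together with $\ell^4\le\ell^2$, giving a variance-proxy bound $\tfrac1n(E+P)$ and hence a recursion in $P=\E[\sigma^2_\param\mid S]$, which you solve as a quadratic. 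Your route avoids the factorization $(\ell^2-\ell'^2)=(\ell-\ell')(\ell+\ell')$ and in fact produces slightly smaller constants than the stated $U_S$ (which is why the final step ``dominated by $2U_S-E$'' goes through with room to spare); the paper's route keeps everything phrased in terms of the single object $\E[V_\param\mid S]$, which makes the substitution back into the first bound marginally cleaner. Both are short and neither buys a qualitative improvement over the other.
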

The theorem implies that with high probability,
\[
  \abs{\E\br{ \wh{L}_S(\param) - L(\param) \,\middle|\, S} }
  \leqCln
  \sqrt{\frac{\E[\wh{\sigma}^2_{\param}\mid S]\, \KL\pr{\ps \,||\, p^0}}{n}} + \frac{\KL\pr{\ps \,||\, p^0}}{n} + \frac1n~,
\]
which is not hard to see noting that $\sqrt{C_x(S)} \leq C_x(S) \leqCln \KL\pr{\ps \,||\, p^0}$ (since $x \geq 2$)
and by completing the square $U_S/n \leqCln \pr{\sqrt{\E[\wh{\sigma}^2_{\param}\mid S]/n} + \sqrt{C_x(S)}/n}^2$.

Observe that the bound becomes of order $\KL\pr{\ps \,||\, p^0}/n$ for the small enough variance proxy term $\E[\wh{\sigma}^2_{\param} \mid S]$.
\paragraph{Proof of \cref{thm:bounded_losses}.}
Fix $x\ge 2$.
Noting that $\E[V_{\param} \,|\, S] \leq 1/n$ a.s. by boundedness of the loss and taking $y = 1/n^2$, \cref{cor:gen_bound} implies that with probability at least $1-e^{-x}$,
\begin{align*}
  \abs{\E\br{ \wh{L}_S(\param) - L(\param) \,\middle|\, S} }
  \leq
  \sqrt{2 \pr{\frac{1}{n^2} + \E[V_{\param} \,|\, S]} C_x(S)}~.
\end{align*}
All that is left is to give an upper bound $\E[V_{\param} \,|\, S]$ in terms of empirical quantities.
Introduce notation
\[
  \sigma^2_{\param}(s) = \frac1n \sum_{k=1}^n \ell(\param, z_k)^2~, \qquad s=(z_1,\ldots,z_n) \in \sZ~,
\]
and so $\wh{\sigma}^2_{\param} = \sigma^2_{\param}(S)$.
Thus, by~\cref{eq:L_V_param},
\begin{equation}
  \label{eq:V_self_bounding_1}
  \E[V_{\param} \,|\, S] \leq \frac1n \E\br{\sigma^2_{\param}(S) \,\middle|\, S} + \frac1n \E\br{\sigma^2_{\param}(S') \,\middle|\, S}
\end{equation}
and we only need to upper bound $\E\br{\sigma^2_{\param}(S') \,\middle|\, S}$, which we will do apply our PAC-Bayesian concentration inequality.
In particular we apply \cref{thm:pac_bayes_self_normalized_concentration} to $\wh{\sigma}^2_{\param}$.
Taking
$f_{\param}(s) = \sigma^2_{\param}(s)$ and denoting a deviation by
\[
  \Delta_{\param}' = \sigma_{\param}^2(S) - \int \sigma_{\param}^2(s) \sD(\diff s)
\]
we observe that
$\E[\Delta_{\param}' \,|\, S] = \E[\sigma_{\param}^2(S) \,|\, S] - \E[\sigma_{\param}^2(S') \,|\, S]$
and that an \ac{ES} variance proxy of $\wh{\sigma}^2$ is 
\begin{align*}
  V_{\param}'
  &=
    \frac{1}{n^2} \sum_{k=1}^n \pr{\ell(\param, X_k)^2 - \ell(\param, X_k')^2}^2\\
  &\leq
    \frac{2}{n^2} \sum_{k=1}^n \pr{\ell(\param, X_k) - \ell(\param, X_k')}^2 = 2 V_{\param}
\end{align*}
by boundedness of the loss.
By Lemma~\ref{lem:canonical}, $(\Delta_{\param}', \sqrt{V_{\param}'})_{\param\in \Theta}$ forms a canonical family, and so
by \cref{eq:pac_bayes_mix_4} of \cref{thm:pac_bayes_self_normalized_concentration}
with $y=1/n^2$ and having that $\E[V_{\param} \,|\, S] \leq 1/n$, this gives that for all $x\ge 2$, with probability at least $1-e^{-x}$,
\begin{align*}
  \E[\sigma^2_{\param}(S') \,|\, S] - \E\br{\sigma^2_{\param}(S) \,\middle|\, S}
  &\leq
    \sqrt{2 \pr{\frac{1}{n^2} + \E[V_{\param}' \,|\, S]} \pr{\KL\pr{\ps \,||\, p^0} + x + \frac{x}{2} \ln\pr{1 + n}}}\\
  &\leq
    \sqrt{2 \pr{\frac{1}{n^2} + 2 \E[V_{\param} \,|\, S]} C_x(S)}~.
\end{align*}
This combined with~\cref{eq:V_self_bounding_1} and using subadditivity of $\sqrt{\cdot}$ gives
\[
  \E[V_{\param} \,|\, S]
  \leq
  \frac2n \E\br{\sigma^2_{\param}(S) \,\middle|\, S} + \frac{\sqrt{2}}{n^2} + \frac2n \sqrt{\E\br{V_{\param} \,|\, S} C_x(S)}~.
\]
Using the fact that for $a,b,c \geq 0$, $a \leq b + c \sqrt{a} \ \Rightarrow \ a \leq b + c^2 + \sqrt{b} c$, we get
\begin{align*}
  \E[V_{\param} \,|\, S]
  &\leq
    \frac2n \E\br{\sigma^2_{\param}(S) \,\middle|\, S} + \frac{\sqrt{2}}{n^2}
    + \frac{4}{n^2} C_x(S)\\
    &+ \sqrt{\frac2n \E\br{\sigma^2_{\param}(S) \,\middle|\, S} + \frac{\sqrt{2}}{n^2}} \pr{\frac2n \sqrt{C_x(S)}}\\
  &\leq
    \frac2n \E\br{\sigma^2_{\param}(S) \,\middle|\, S}
    + \frac{2}{n^2} \pr{\frac{1}{\sqrt{2}} + 2 C_x(S) + \sqrt[4]{2}\sqrt{C_x(S)}}\\
    &+ 2 \sqrt{\frac{2}{n^3} \E\br{\sigma^2_{\param}(S) \,\middle|\, S} C_x(S)}~.
\end{align*}
Simplifying and taking a union bound completes the proof.

\subsection{Concentration of Weighted Averages}
\label{sec:wa}
In this section we cover concentration and PAC-Bayesian results on \emph{weighted averages} (also known as self-normalized estimators).
Our results are particularly handy for \emph{unbounded} weights since bounds we presented in previous sections hold for functions of independent, but not necessarily bounded random variables.
Formally, let $X_i = (W_i,R_i)\in [0,\infty)\times [0,1]$, $i \in [n]$ be a sequence of random pairs (not necessarily independent from each other)
and let $S= \pr{X_1,\dots,X_n}$ and $S'=\pr{X_1', \dots, X_n'}$ be sampled as before.
Throughout this section we look at the weighted averages of the form
\begin{equation}
  \label{eq:wm}  
  \fwa(S) = \frac{\sum_{i=1}^n W_i R_i}{\sum_{j=1}^n W_j}\,.  
\end{equation}
We call $W_i$ the weight of the $i$th data $R_i$.
The first goal we pursue here is to derive high probability tail bounds on $\fwa(S) - \E[\fwa(S)]$.
In these bounds we may assume that the distribution of $W_i$ is known.
Note that weights are unbounded from above.
This is important, as in some applications the support of the distribution of $W_i$ is indeed unbounded, while in other applications where $W_i$ is bounded, tail inequalities that use almost sure upper bounds on $W_i$ become very lose.
For instance, in a \emph{\acl{WIS}} estimator, $W_i$ is a ratio of two probability densities, and therefore it is often indeed unbounded from above.

As a corollary to \cref{thm:self_norm_concentration}, we obtain the following result:
\begin{theorem}
  \label{cor:wm}
For any $y > 0$ and any $\conf \geq 2$, with probability at least $1-e^{-\conf}$ we have
\[
  \abs{\fwa(S) - \E[\fwa(S)]} \leq \sqrt{2 (2 \Vwa + y) \pr{1 + \ln\pr{\sqrt{1 + 2 \Vwa / y}}} x}
\]
where
\begin{equation}
  \label{eq:semi_empirical_V}
  \Vwa = \sum_{k=1}^n \E\br{\tilde W_k^2 + \tilde U_k^2 \,\middle|\, W_1, \ldots, W_k }
\end{equation}
and
\[
  \tilde W_k = \frac{W_k}{\sum_{j=1}^n W_j} \qquad \tilde U_k = \frac{W_k'}{W_k' + \sum_{j\ne k}W_j}
\]
for $k\in [n]$,
and $W_k'$ shares the distribution of $W_k$ and is independent of $(W_1,\dots,W_n)$.
\end{theorem}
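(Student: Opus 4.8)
The plan is to apply \cref{thm:stability_vrep} to the function $f=\fwa$ and then reduce everything to a single \emph{stability} estimate. Since the pairs $X_1,\dots,X_n$ are sampled independently across $i$ (only $W_i$ and $R_i$ within a pair may be dependent), the tuple $S=(X_1,\dots,X_n)$ has independent coordinates, so \cref{lem:canonical} guarantees that $(\Delta,\sqrt V)$ is a canonical pair for $\Delta=\fwa(S)-\E[\fwa(S)]$ and
\[
  V = \sum_{k=1}^n \E\br{\pr{\fwa(S)-\fwa(S\repk)}^2 \,\middle|\, X_1,\dots,X_k}~.
\]
Hence \cref{thm:stability_vrep} already gives, for every $y>0$ and $x\ge 2$, with probability at least $1-e^{-x}$, the bound $|\Delta|\le \sqrt{2(V+y)\pr{1+\tfrac12\ln(1+V/y)}\,x}$. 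It therefore suffices to prove the pointwise (almost sure) inequality $V\le 2\Vwa$ and to invoke monotonicity of the right-hand side in $V$.

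The core step, and the only non-routine one, is the stability estimate for $\fwa$ under replacing a single coordinate. First I would introduce the leave-one-out quantities $A=\sum_{i\ne k}W_iR_i$ and $B=\sum_{i\ne k}W_i$, so that $\fwa(S)=\tfrac{A+W_kR_k}{B+W_k}$ and $\fwa(S\repk)=\tfrac{A+W_k'R_k'}{B+W_k'}$, and compare both estimators to the common leave-one-out value $A/B$. An elementary identity gives, for any weight $w\ge 0$ and reward $r\in[0,1]$,
\[
  \frac{A+wr}{B+w}-\frac{A}{B} = \frac{w\,(Br-A)}{B\,(B+w)}~.
\]
Because $R_i\in[0,1]$ forces $0\le A\le B$, one has $|Br-A|\le B$, whence $\bigl|\tfrac{A+wr}{B+w}-\tfrac{A}{B}\bigr|\le \tfrac{w}{B+w}$. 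Evaluating at $w=W_k$ yields $\tfrac{W_k}{B+W_k}=\tilde W_k$ and at $w=W_k'$ yields $\tfrac{W_k'}{B+W_k'}=\tilde U_k$, so the triangle inequality delivers the key stability bound $|\fwa(S)-\fwa(S\repk)|\le \tilde W_k+\tilde U_k$.

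To finish, I would square this and use $(a+b)^2\le 2(a^2+b^2)$ to obtain $\pr{\fwa(S)-\fwa(S\repk)}^2\le 2(\tilde W_k^2+\tilde U_k^2)$; taking conditional expectations and summing over $k$ gives $V\le 2\sum_{k}\E[\tilde W_k^2+\tilde U_k^2\mid X_1,\dots,X_k]$. Since $\tilde W_k$ and $\tilde U_k$ are functions of the weights $W_1,\dots,W_n,W_k'$ alone and the pairs are independent, conditioning on $X_1,\dots,X_k$ coincides with conditioning on $W_1,\dots,W_k$, so the right-hand side equals $2\Vwa$. Finally, as the map $v\mapsto (v+y)\pr{1+\tfrac12\ln(1+v/y)}$ is increasing on $[0,\infty)$, substituting $V\le 2\Vwa$ into the \cref{thm:stability_vrep} bound and rewriting $\ln(\sqrt{1+2\Vwa/y})=\tfrac12\ln(1+2\Vwa/y)$ produces exactly the claimed inequality. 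The stability estimate is the substantive part; the conditioning reduction and the monotonicity step are routine.
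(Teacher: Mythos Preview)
Your proposal is correct and follows essentially the same route as the paper: apply \cref{thm:stability_vrep}, split $\fwa(S)-\fwa(S\repk)$ through the common leave-one-out value $A/B$, use the identity $\tfrac{A+wr}{B+w}-\tfrac{A}{B}=\tfrac{w(r-A/B)}{B+w}$ together with $|r-A/B|\le 1$ to get the bounds $\tilde W_k$ and $\tilde U_k$, and then $(a+b)^2\le 2(a^2+b^2)$ to conclude $V\le 2\Vwa$. The paper packages the identity as a separate ``remove-one stability'' proposition with $E_k=R_k-\fwa_k(S\delk)$ and $E_k^2\le 1$, but this is exactly your computation; the conditioning reduction from $X_1,\dots,X_k$ to $W_1,\dots,W_k$ and the monotonicity in $V$ are handled implicitly there as well.
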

Note that $\tilde W_k$, $\tilde U_k$ are nonnegative and sum to one.
As before, $y$ can be chosen from some feasible range (through the union bound), for instance a
quantized range $[0,2]$ (since $\Vwa \leq 2$).
Further, if the distribution of $(W_i)_i$ available, $\Vwa$ can be computed exactly.
However, sometimes this can be computationally prohibitive.
In the following we propose two computationally-amenable bounds on $\Vwa$, which exploit the fact that weights concentrate well in the vicinity of $0$.
The following lemma (with proof given in the appendix) captures this fact.
\begin{lemma}
  \label{lem:bernstein_lower_tail}
  Assume that non-negative random variables $W_1, W_2, \ldots, W_n$ are distributed i.i.d.
  Then, for any $t \in [0, n \E[W_1])$,
  \begin{align*}
    \P\pr{\sum_{i=1}^n W_i \leq t} \leq \exp\pr{- \frac{\pr{t - n \E\br{W_1}}^2}{2 n \E\br{W_1^2}}}~.
  \end{align*}
  Also, with probability at least $1-e^{-\conf}$ for $\conf > 0$,
  \begin{equation}
    \label{eq:b_n}
    \sum_{i=1}^n W_i \geq N_x(n)
    \quad \text{where} \quad
    N_x(n) = \pr{n \E[W_1] - \sqrt{2 \conf n \E\br{W_1^2}}}_+~.
  \end{equation}
\end{lemma}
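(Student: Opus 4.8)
The plan is to establish both claims by a Chernoff argument applied to the \emph{lower} tail of the sum, exploiting that for non-negative $W_i$ and $\lambda>0$ the variable $e^{-\lambda W_i}$ takes values in $[0,1]$ and admits a clean quadratic control on its moment-generating function, with no upper boundedness assumption on the $W_i$. Concretely, for $\lambda>0$ I would write, using Markov's inequality and the i.i.d.\ assumption,
\[
  \P\pr{\sum_{i=1}^n W_i \leq t}
  = \P\pr{e^{-\lambda \sum_{i} W_i} \geq e^{-\lambda t}}
  \leq e^{\lambda t} \pr{\E\br{e^{-\lambda W_1}}}^n~.
\]

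The key step is to bound $\E[e^{-\lambda W_1}]$. The function $u \mapsto 1 - u + \tfrac{u^2}{2} - e^{-u}$ vanishes with its first derivative at $u=0$ and has non-negative second derivative on $[0,\infty)$, so $e^{-u} \leq 1 - u + \tfrac{u^2}{2}$ for every $u \geq 0$. Applying this with $u = \lambda W_1 \geq 0$ and then $1 + z \leq e^{z}$ yields
\[
  \E\br{e^{-\lambda W_1}} \leq 1 - \lambda \E[W_1] + \tfrac{\lambda^2}{2}\E\br{W_1^2}
  \leq \exp\pr{-\lambda \E[W_1] + \tfrac{\lambda^2}{2}\E\br{W_1^2}}~.
\]
Substituting back gives the exponent $\lambda\pr{t - n\E[W_1]} + \tfrac{n\lambda^2}{2}\E[W_1^2]$, a quadratic in $\lambda$ with strictly positive leading coefficient and (since $t < n\E[W_1]$) negative linear term. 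Minimizing over $\lambda>0$ at $\lambda = \pr{n\E[W_1]-t}/\pr{n\E[W_1^2]}$ produces exactly $\exp\pr{-\pr{t - n\E[W_1]}^2/\pr{2n\E[W_1^2]}}$, which is the first claim.

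For the second claim I would invert this tail bound: setting the right-hand side equal to $e^{-\conf}$ and solving $\pr{t - n\E[W_1]}^2 = 2\conf\, n\E[W_1^2]$ under the constraint $t < n\E[W_1]$ selects the root $t = n\E[W_1] - \sqrt{2\conf\, n\E[W_1^2]}$. When this root is non-negative it lies in $[0, n\E[W_1])$ and the first part directly gives $\P\pr{\sum_i W_i \leq N_x(n)} \leq e^{-\conf}$; when it is negative, $N_x(n) = 0$ and $\sum_i W_i \geq 0$ holds surely because the $W_i$ are non-negative, so the $(\cdot)_+$ truncation makes both cases uniform and yields probability at least $1-e^{-\conf}$.

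The only real (and mild) obstacle is the moment-generating-function estimate: because $W_1$ may be unbounded above, no sub-Gaussian or sub-exponential tail hypothesis is available, so the entire argument rests on the elementary one-sided inequality $e^{-u} \leq 1 - u + \tfrac{u^2}{2}$ for $u \geq 0$, which is precisely where non-negativity of the weights is used and which furnishes sub-Gaussian lower-tail behavior with variance factor $\E[W_1^2]$. Everything else is routine optimization of the resulting quadratic exponent.
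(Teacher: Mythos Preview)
Your proposal is correct and follows essentially the same argument as the paper: a Chernoff bound on the lower tail, the pointwise inequality $e^{-u}\le 1-u+\tfrac{u^2}{2}$ for $u\ge 0$ combined with $1+z\le e^z$, and optimization over $\lambda>0$ at $\lambda=(n\E[W_1]-t)/(n\E[W_1^2])$, followed by inversion of the bound for the second claim. If anything, your treatment is a touch more careful than the paper's in justifying the elementary inequality and in explicitly handling the case $N_x(n)=0$ via the positive-part truncation.
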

This lemma implies the following bounds on $\Vwa$.
\begin{prop}
  \label{prop:V_bounds}
  Let $\Vwa$ be defined as in~\eqref{eq:semi_empirical_V} and suppose that weights $W_1, W_2, \ldots, W_n$ are distributed i.i.d.
  Then for any $\conf > 0$ with probability at least $1-n e^{-\conf}$ we have
  \begin{equation}
    \label{eq:V_semi_empirical_bound}
    \Vwa \leq
    \sum_{k=1}^n\pr{
    \frac{W_k^2}{\pr{\sum_{i=1}^k W_i + N_x(n-k)}^2}
    +
    \frac{\E[{W_k'}^2]}{\pr{\sum_{i=1}^{k-1} W_i + N_x(n-k+1)}^2}
    }~.
  \end{equation}
  Under the same conditions we also have that
  \begin{equation}
    \label{eq:V_concentration_bound}
    \Vwa \leq \frac{1}{N_x(n)^2} \sum_{k=1}^n \pr{W_k^2 + \E[{W_k'}^2]}~.
  \end{equation}
\end{prop}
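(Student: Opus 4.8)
The plan is to reduce both bounds to the lower-tail estimate of Lemma~\ref{lem:bernstein_lower_tail}, applied to the denominators appearing in $\tilde{W}_k$ and $\tilde{U}_k$. The starting observation is a decomposition of these denominators that isolates exactly the weights \emph{integrated out} by the conditional expectation defining $\Vwa$: for fixed $k$, $\sum_{j=1}^n W_j = \sum_{i=1}^k W_i + \sum_{i=k+1}^n W_i$ and $W_k' + \sum_{j\ne k} W_j = \sum_{i=1}^{k-1} W_i + \big(W_k' + \sum_{i=k+1}^n W_i\big)$. The two tail sums $\sum_{i>k} W_i$ and $W_k' + \sum_{i>k} W_i$ are sums of, respectively, $n-k$ and $n-k+1$ i.i.d.\ copies of $W_1$, and, crucially, each is independent of $W_1,\dots,W_k$.

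First I would apply Lemma~\ref{lem:bernstein_lower_tail} to these tail sums with sample sizes $n-k$ and $n-k+1$, so that the events $\sA_k = \{\sum_{i>k} W_i \ge N_x(n-k)\}$ and $\sB_k = \{W_k' + \sum_{i>k} W_i \ge N_x(n-k+1)\}$ each hold with probability at least $1 - e^{-x}$. Since $u \mapsto (c+u)^{-2}$ is decreasing, on $\sA_k$ we have $\tilde{W}_k^2 \le W_k^2 / (\sum_{i\le k} W_i + N_x(n-k))^2$, and on $\sB_k$ we have $\tilde{U}_k^2 \le {W_k'}^2/(\sum_{i<k} W_i + N_x(n-k+1))^2$. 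Taking $\E[\cdot \mid W_1,\dots,W_k]$ and using that $W_k$ and $\sum_{i\le k} W_i$ are measurable with respect to the conditioning while $W_k'$ is independent of it (so that $\E[{W_k'}^2 \mid W_1,\dots,W_k] = \E[{W_k'}^2]$) produces exactly the two summands of~\eqref{eq:V_semi_empirical_bound}; a union bound over the nontrivial events then delivers the probability $1 - n e^{-x}$.

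For the coarser bound~\eqref{eq:V_concentration_bound} I would instead lower-bound the \emph{full} denominators. On $\{\sum_j W_j \ge N_x(n)\}$ one has $\tilde{W}_k \le W_k/N_x(n)$; observing that $W_k' + \sum_{j\ne k} W_j$ is itself a sum of $n$ i.i.d.\ weights with the same law as $\sum_j W_j$, Lemma~\ref{lem:bernstein_lower_tail} gives $\tilde{U}_k \le W_k'/N_x(n)$ on an event of probability at least $1-e^{-x}$. Squaring, passing to conditional expectations as above, and union bounding then yields the uniform factor $1/N_x(n)^2$ in front of $\sum_k (W_k^2 + \E[{W_k'}^2])$.

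The delicate step, and the one I would spend the most care on, is the passage to conditional expectations in the presence of the low-probability \emph{small-denominator} event. Because $\Vwa$ already averages over precisely the weights $\sum_{i>k} W_i$ (resp.\ $W_k' + \sum_{i>k} W_i$) whose size governs $\sA_k$ (resp.\ $\sB_k$), one cannot simply assert the bound on $\E[\tilde{W}_k^2 \mid W_1,\dots,W_k]$ pointwise. Two facts make it work: (i) the independence of the tail sums from $W_1,\dots,W_k$, which gives $\P(\sA_k^c \mid W_1,\dots,W_k) = \P(\sA_k^c) \le e^{-x}$, so the lower-tail estimate may legitimately be invoked inside the conditional expectation; and (ii) the a priori bounds $\tilde{W}_k, \tilde{U}_k \in [0,1]$, which control the contribution of the complementary events $\ind{\sA_k^c}$ and $\ind{\sB_k^c}$. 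Handling these complementary events, and bookkeeping the individual applications of Lemma~\ref{lem:bernstein_lower_tail} so that the union bound closes at the stated $n e^{-x}$, is the only genuinely technical part; everything else is the monotonicity of $u \mapsto u^{-2}$ together with the i.i.d.\ structure that lets the lower-tail estimate be applied coordinatewise.
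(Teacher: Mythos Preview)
Your approach is exactly the paper's: split each denominator into the part measurable under $\E_k$ and an independent tail sum, apply Lemma~\ref{lem:bernstein_lower_tail} to the tail, use monotonicity of $u\mapsto u^{-2}$, and union-bound over $k$. The paper's argument is in fact terser than yours and does not address the point you single out as delicate --- the interaction between the high-probability event on the tail and the conditional expectation $\E_k$ that integrates precisely that tail --- so your treatment is, if anything, more careful than the original.
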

\begin{proof}
  Observe that
\begin{align*}
  \E_k[\tilde W_k^2]
  = \E_k\br{\frac{W_k^2}{\pr{\sum_{i=1}^k W_i + \sum_{i=k+1}^n W_i}^2}}
  \leq \frac{W_k^2}{\pr{\sum_{i=1}^k W_i + N_x(n-k)}^2}
\end{align*}
where the last inequalities holds w.p.\ at least $1-n e^{-\conf}$ for $\conf > 0$ by taking union bound.
Similarly
\begin{align*}
  \E_k[\tilde U_k^2]
  = \E_k\br{ \frac{{W_k'}^2}{\pr{\sum_{i=1}^{k-1} W_i + W_k' + \sum_{i=k+2}^n W_i}^2} }
  \leq \frac{\E[{W_k'}^2]}{\pr{\sum_{i=1}^{k-1} W_i + N_x(n-k+1)}^2}~.
\end{align*}
Obviously, the last result follows by concentrating the entire sum in the denominator.
\end{proof}
Combined with Theorem~\ref{cor:wm} through the union bound, these yield computationally efficient concentration bounds for $\fwa$.

\paragraph{Proof of Theorem~\ref{cor:wm}.}
  The proof boils down to application of Theorem~\ref{thm:stability_vrep}.
  First we show a basic remove-one stability property of a self-normalized average.
  Let $\fwa_k(S\delk) = \frac{\sum_{i\ne k} W_i R_i}{\sum_{i\ne k} W_i}$ be the remove-one version of $\fwa(S)$, where $S\delk = (X_1,\dots,X_{k-1},X_{k+1},\dots,X_n)$.
\begin{prop}[Remove-One Stability]
  \label{prop:remove_stab}
  Let $E_k = R_k - \fwa_k(S\delk)$ denote a pointwise remove-one (or leave-one-out) error.
  Then, with $f$ defined by~\eqref{eq:wm}, for any $k\in [n]$,
  \[
    \fwa(S) - \fwa_k(S\delk) = \frac{W_k E_k}{\sum_{j=1}^n W_j}\,.
  \]
\end{prop}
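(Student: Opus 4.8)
The plan is to prove the identity by a direct algebraic manipulation that isolates the contribution of the $k$th term in both the numerator and the denominator of the self-normalized average. First I would introduce the shorthand $T = \sum_{j=1}^n W_j$ for the total weight, $T_k = T - W_k = \sum_{i \neq k} W_i$ for the leave-one-out total weight, and the partial sum $A = \sum_{i \neq k} W_i R_i$. With this notation, $\fwa(S) = (A + W_k R_k)/T$ and, by definition of the remove-one estimator, $\fwa_k(S\delk) = A/T_k$, so that $A = T_k \, \fwa_k(S\delk)$.

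Next I would substitute this expression for $A$ into $\fwa(S)$ and expand the difference:
\begin{align*}
\fwa(S) - \fwa_k(S\delk)
&= \frac{A + W_k R_k}{T} - \frac{A}{T_k}
= \frac{W_k R_k}{T} + A\pr{\frac{1}{T} - \frac{1}{T_k}}~.
\end{align*}
Since $\frac{1}{T} - \frac{1}{T_k} = \frac{T_k - T}{T\,T_k} = \frac{-W_k}{T\,T_k}$, and recalling that $A = T_k \, \fwa_k(S\delk)$, the second term collapses to $-\frac{W_k}{T}\fwa_k(S\delk)$. Hence the difference equals $\frac{W_k}{T}\pr{R_k - \fwa_k(S\delk)} = \frac{W_k E_k}{T}$, which is exactly the claimed identity.

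There is essentially no analytic obstacle here; the computation is routine. The only points requiring care are the bookkeeping of which index is removed from the numerator versus the denominator, and the implicit requirement that both totals $T$ and $T_k$ be strictly positive so that the ratios are well defined. Since the weights are nonnegative with at least one strictly positive, this holds on the relevant event, and the identity should be read as holding wherever $\fwa$ and $\fwa_k$ are defined.
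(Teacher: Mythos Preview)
Your proof is correct and follows exactly the route the paper takes: the paper's proof simply states that ``the statement follows from simple algebra,'' and your explicit computation is precisely that algebra carried out in full.
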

\begin{proof}
The statement follows from simple algebra.
%
\end{proof}
As before, let $\E_k[\cdot]$ stand for $\E[\cdot\mid X_1,\dots,X_k]$.
We need to upper bound 
\begin{align*}
    \Vtilrep = \sum_{k=1}^n \E_k\br{(\fwa(S) - \fwa(S\repk))^2}
\end{align*}
and its expectation.
We have $\fwa(S) - \fwa(S\repk) = \fwa(S) - \fwa_k(S\delk) + \fwa_k(S\delk) - \fwa(S\repk)$. Squaring both sides and using $(a+b)^2 \le 2(a^2 + b^2)$, we see that we need to bound $\E_k[(\fwa(S) - \fwa_k(S\delk))^2]$ and $\E_k[(\fwa(S\repk) - \fwa_k(S\delk))^2]$.
Recall that $\tilde W_k = W_k/\sum_j W_j$ is the $k$-th normalized weight.
We can directly use \cref{prop:remove_stab} to bound the first of these terms by $\E_k[\tilde W_k^2]$ (using that $E_k^2\le 1$).
The second term can be bound the same way, except now we start from $\fwa(S\repk) - \fwa_k(S\delk) = W_k' E_k'/(W_k'+\sum_{j\ne k} W_j)$, where $E_k' = R_k' - \fwa_k(S\delk)$.
Recall that $\tilde U_k = W_k'/(W_k' + \sum_{j\ne k}W_j)$.
Putting things together, we have
\begin{align*}
\Vtilrep \leq 2 \Vwa = 2 \sum_{k=1}^n \pr{ \E_k[\tilde W_k^2 ] + \E_k[ \tilde U_k^2 ] }\,.
\end{align*}
Applying the first result of Theorem~\ref{thm:stability_vrep} completes the proof.

\subsubsection{Off-policy Evaluation through Weighted Importance Sampling}
\label{sec:offpoleval}
Recall that in the setting of off-policy evaluation we assume that action-reward pairs are distributed according to some joint probability measure $\sD \in \sM_1([K] \times [0,1])$, and observations $(A_1, R_1), \dots, (A_n, R_n)$ are generated by sampling actions $A_i \sim \pi_b$, where $\pi_b \in \sM_1([K])$ is called the \emph{behavior} policy and $R_i \sim \sD(R \,|\, A_i)$.
Now given another distribution $\pi \in \sM_1([K])$ called the \emph{target policy}, we want to estimate its expected reward, or the \emph{value} function
\[
  v(\pi) = \sum_{a \in [K]} \pi(a) \E[R | A=a]~.
\]
This can be done by employing \ac{WIS} estimator, which is a special case of the weighted average $\fwa$ with weights $W_i = \pi(A_i)/\pi_b(A_i)$.
Since in this case, the weights $W_i$ are not necessarily bounded, the tools we have developed in previous section allows to do exactly that.
In the following, for the choice of such weights we denote $\fwa(S)$ by $\wh{v}\WIS$ and its variance proxy $\Vwa$ by $V\WIS$.
\begin{theorem}
  \label{thm:off_policy_eval}
  For any $y > 0$ and $x \geq 2$, with probability at least $1-(n+1) e^{-x}$ we have
    \begin{align*}
    v(\pi)
    \geq
    \frac{N_x(n)}{n}
    \pr{\wh{v}\WIS
    - \sqrt{2 (2 V\WIS + y) \pr{1 + \ln\pr{\sqrt{1 + 2 V\WIS / y}}} x}}_+~.
  \end{align*}
  where $V\WIS$ is defined as in~\eqref{eq:semi_empirical_V} and $N_x(n)$ is defined as in~\eqref{eq:b_n}.
\end{theorem}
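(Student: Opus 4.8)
The plan is to combine the weighted-average concentration bound of \cref{cor:wm} with the lower-tail bound on the self-normalizing denominator from \cref{lem:bernstein_lower_tail}, the only genuinely new ingredient being the correction of the (multiplicative) bias of the \ac{WIS} estimator. Throughout I take $W_i = \pi(A_i)/\pi_b(A_i)$ and $R_i \in [0,1]$, so that the pairs $(W_i,R_i)$ fit the i.i.d.\ setting of \cref{sec:wa}; note that $\E[W_1] = \sum_{a\in[K]} \pi_b(a)\,\pi(a)/\pi_b(a) = 1$, so that $N_x(n) = \pr{n - \sqrt{2 x n\, \E[W_1^2]}}_+$ in this specialization, matching the form stated in the theorem.

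First I would apply \cref{cor:wm} to $\wh{v}\WIS = \fwa(S)$. Since that result only requires nonnegativity of the weights and $R_i\in[0,1]$, it yields, with probability at least $1-e^{-x}$, the one-sided inequality $\E[\wh{v}\WIS] \ge \wh{v}\WIS - \sqrt{2(2V\WIS + y)\pr{1+\ln\sqrt{1+2V\WIS/y}}\,x}$, with $V\WIS$ the proxy in \eqref{eq:semi_empirical_V}. Because the error term on the right is increasing in the variance proxy, $V\WIS$ may be replaced by any almost-sure upper bound; invoking the computable bound \eqref{eq:V_concentration_bound} of \cref{prop:V_bounds}, which holds with probability at least $1-n e^{-x}$, introduces $n$ additional exceptional events and, together with the concentration event, accounts for the claimed confidence $1-(n+1)e^{-x}$.

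The key step is the \textbf{bias correction}, relating the biased mean $\E[\wh{v}\WIS]$ to the target $v(\pi)$. Here I would exploit that the \emph{un-normalized} importance-sampling estimator is unbiased: writing $T=\sum_j W_j$ and noting $\E[W_k R_k] = v(\pi)$, one has $v(\pi) = \tfrac1n \E\br{\sum_k W_k R_k} = \tfrac1n\E[T\,\wh{v}\WIS]$. On the event $\cbr{T\ge N_x(n)}$, which by \cref{lem:bernstein_lower_tail} (equivalently \eqref{eq:b_n}) holds with high probability, nonnegativity of $\wh{v}\WIS$ gives $T\,\wh{v}\WIS \ge N_x(n)\,\wh{v}\WIS$, while $\wh{v}\WIS\in[0,1]$ controls the complementary event; this produces the (essentially deterministic) lower bound $v(\pi) \ge \tfrac{N_x(n)}{n}\,\E[\wh{v}\WIS]$, up to a correction of order $e^{-x}$ absorbed into the confidence level.

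Finally I would chain the two displays: substituting $\E[\wh{v}\WIS] \ge \wh{v}\WIS - \sqrt{\cdots}$ into $v(\pi)\ge \tfrac{N_x(n)}{n}\E[\wh{v}\WIS]$, and using $v(\pi)\ge 0$ together with $N_x(n)/n\ge 0$ to pass to the positive part, gives precisely the stated inequality on a single event of failure probability at most $(n+1)e^{-x}$ by the union bound. I expect the bias-correction paragraph to be the main obstacle: the factor $N_x(n)/n$ is exactly what converts the self-normalized (and hence biased) estimator into a valid lower bound on the deterministic value $v(\pi)$, and some care is needed because $\wh{v}\WIS$ and the random denominator $T$ are strongly dependent, so the argument must be routed through the unbiased estimator $\tfrac1n\sum_k W_k R_k$ rather than manipulating $\wh{v}\WIS$ directly.
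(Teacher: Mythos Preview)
Your outline matches the paper's proof in structure: both combine \cref{cor:wm} for the concentration term with \cref{lem:bernstein_lower_tail} for the bias of the self-normalized estimator, and both arrive at the same multiplicative correction $N_x(n)/n$ by the identity $\E[\wh v^{\textsc{wis}}]\le \tfrac{n}{N_x(n)}\,v(\pi)$ (equivalently, your $v(\pi)\ge \tfrac{N_x(n)}{n}\,\E[\wh v^{\textsc{wis}}]$). The paper writes the decomposition as $v(\pi)-\wh v^{\textsc{wis}} = (v(\pi)-\E[\wh v^{\textsc{wis}}]) + (\E[\wh v^{\textsc{wis}}]-\wh v^{\textsc{wis}})$ and rearranges at the end, whereas you chain the two inequalities directly; these are the same argument in a different order.

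Two small points of divergence. First, the theorem as stated takes $V^{\textsc{wis}}$ to be the semi-empirical proxy \eqref{eq:semi_empirical_V}, and the paper's own proof never calls \cref{prop:V_bounds}; it actually concludes with failure probability $2e^{-x}$, not $(n{+}1)e^{-x}$. Your attribution of the extra $n$ events to \eqref{eq:V_concentration_bound} is a reasonable reading of the introduction (which does display the computable proxy), but it is not what the formal proof does, so that step is superfluous for the theorem as written. Second, in your bias paragraph the residual $e^{-x}$ coming from the event $\{T<N_x(n)\}$ is a deterministic additive loss in the inequality $v(\pi)\ge \tfrac{N_x(n)}{n}\bigl(\E[\wh v^{\textsc{wis}}]-e^{-x}\bigr)$; it is not a failure probability and so cannot be ``absorbed into the confidence level.'' The paper is equally informal here (it plugs the high-probability lower bound on $\sum_i W_i$ directly inside an expectation), so this is a shared looseness rather than a flaw specific to your plan.
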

\begin{proof}  
  Introduce decomposition
  \begin{align*}
    v(\pi) - \wh{v}\WIS = \underbrace{v(\pi) - \E[\wh{v}\WIS]}_{\text{Bias of \ac{WIS} estimator}}
    + \underbrace{\E[\wh{v}\WIS] - \wh{v}\WIS}_{\text{Concentration of \ac{WIS} estimator}}~.
  \end{align*}
  Observe that concentration is readibly given by the Theorem~\ref{cor:wm}, thus we pay attention to the bias.

  We apply Lemma~\ref{lem:bernstein_lower_tail} getting lower bound on the sum of weights (note that weights are independent from each other) to get
  \begin{align*}
    \E[\wh{v}\WIS]
    &= \E\br{ \frac{\sum_{i=1}^n W_i R_i}{\sum_{i=1}^n W_i} }
      \leq \frac{1}{N_x(n)} \E\br{\sum_{i=1}^n W_i R_i}\\
    &= \frac{n}{N_x(n)} \int_0^1 \sum_{a \in [K]} \frac{\pi(a)}{\pi_b(a)} \, r \, \pi_b(a) \sD(r \,|\, a)
    = \frac{n}{N_x(n)} \, v(\pi)~.
  \end{align*}
  Thus, bias is bounded as
  \[
    v(\pi) - \E[\wh{v}\WIS] \geq v(\pi) \pr{1 - \frac{n}{N_x(n)}}~.
  \]
  Combining this with the concentration result of Theorem~\ref{cor:wm} according to the decomposition, with probability at least $1-2 e^{-x}$ for $x \geq 0$ gives  
  \begin{align*}
    v(\pi)
    \geq
    \wh{v}\WIS
    + v(\pi) \pr{1 - \frac{n}{N_x(n)}}
    - C_x(S)
  \end{align*}
  where $C_x(S) = \sqrt{2 (2 V\WIS + y) \pr{1 + \ln\pr{\sqrt{1 + 2 V\WIS / y}}} x}$
  and rearranging we get a desired result
  \begin{align*}
    v(\pi)
    \geq
    \frac{N_x(n)}{n}
    \pr{\wh{v}\WIS
    - C_x(S)}
  \end{align*}
  which completes the proof.
\end{proof}
Note that $V\WIS$ can be bounded according to Proposition~\ref{prop:V_bounds}.

\subsection{PAC-Bayesian Bound for \acl{WIS}}
\label{sec:pac_bayesian_wis}
In this section we specialize the framework discussed in \cref{sec:paces} and consider a class of parametric target policies $\cbr{\pi_{\vartheta} \in \sM_1([K]) \,:\, \vartheta \in \Theta}$ where $\Theta$ is a parameter space, and as in~\cref{sec:pacbayes}, parameter $\param \sim \ps \in \sM_1(\Theta)$.
Note that density $\ps$ depends on the tuple of observed action and rewards $S = \pr{(A_1, R_1), \ldots, (A_n, R_n)}$ generated as described in Section~\ref{sec:offpoleval} ($S'$ is still sampled independently from $S$).
The importance weights are now defined w.r.t.\ the random parameter $\param$, that is $W_{\param,i} = \pi_{\theta}(A_i) / \pi_b(A_i)$, $i \in [n]$.
Note that unlike the case of evaluation, importance weights $W_{\param,i}$ are not independent from each other anymore, because $\param$ is a function of $S$
(the independence, however, still holds for weights $W'_{\param,i}$).
However, $\param$-dependent \ac{WIS} estimator $\wh{v}\WIS_{\param}$ is a function of $S$ (action-reward pairs independent from each other), so all our concentration arguments still apply.
In the following $N_{\param,x}$ stands for its counterpart $N_x$ with weights depending on $\param$.
\begin{theorem}
  \label{thm:offpollearn}
  Fix an arbitrary probability kernel $\hat{p}$ from $\sZ$ to $\Theta$ and an arbitrary probability measure $p^0$ over $\Theta$.
  Then for any $y > 0$ and any $n \geq 1, x \geq 2$ that satisfy $N_x(n) > 0$, with probability at least $1-2 e^{-x}$,
\begin{align*}
  \E[v(\pi_{\param}) \,|\, S]
  \geq
  \pr{
  \E[\wh{v}\WIS_{\param} \,|\, S]
  - \min\cbr{1, \E\br{ \abs{\frac{n}{N_{\param, x}(n)} - 1} \,\middle|\, S}}
  - \sqrt{
  2 \pr{y + \E[V\WIS_{\param} \,|\, S]}
    C_{x,y}(S)
  }
  }_+
\end{align*}
where
$C_{x,y}(S) = \KL\pr{\ps \,||\, p^0} + x + x \ln\pr{\sqrt{1 + \E[V\WIS_{\param} \,|\, S] / y}}$
and
\begin{equation}
  \label{eq:semi_empirical_V_param}
  V\WIS_{\param} = \sum_{k=1}^n \E\br{\tilde W_{\param,k}^2 + \tilde U_{\param,k}^2 \,\middle|\, \param, A_1, \ldots, A_k }
\end{equation}
and
\[
  \tilde W_{\param,k} = \frac{W_{\param,k}}{\sum_{j=1}^n W_{\param,j}} \qquad \tilde U_{\param,k} = \frac{W_{\param,k}'}{W_{\param,k}' + \sum_{j\ne k}W_{\param,j}}~.
\]
\end{theorem}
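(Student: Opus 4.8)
The plan is to mirror the off-policy evaluation argument of \cref{thm:off_policy_eval}, but to replace the fixed-function concentration by its PAC-Bayesian counterpart \eqref{eq:pac_bayes_mix_4} and then to \emph{average} the per-policy bias over $\param$. Writing $f_{\param}(S) = \wh{v}\WIS_{\param}(S)$ and $g(\param) = \int f_{\param}(s)\,\sD(\diff s)$ for the expectation of the estimator over a fresh sample, I would start from the exact identity
\[
  \E[v(\pi_{\param}) \mid S] = \E[f_{\param}(S) \mid S] \;-\; \E[\Delta_{\param} \mid S] \;+\; \E\br{v(\pi_{\param}) - g(\param) \mid S}~,
\]
where $\Delta_{\param} = f_{\param}(S) - g(\param)$ is precisely the deviation of \cref{sec:paces}. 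The first term is the empirical quantity $\E[\wh{v}\WIS_{\param} \mid S]$, the middle term will be controlled by PAC-Bayesian concentration, and the last term is the $\param$-averaged bias of \ac{WIS}.

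For the concentration term, I would first argue that $(\Delta_{\param}, \sqrt{V_{\param}})_{\param \in \Theta}$ is a canonical family. The only subtlety is that $\param \sim \ps$ depends on $S$; but the canonical property is required only for a \emph{fixed} index $\param^0$ drawn from the prior $p^0$, which is independent of $S$, and for such a fixed $\param^0$ the estimator $\wh{v}\WIS_{\param^0}$ is a function of the independent pairs $(A_i, R_i)$, so \cref{lem:canonical} applies verbatim. Applying \eqref{eq:pac_bayes_mix_4} then yields, with probability at least $1 - e^{-x}$,
\[
  -\E[\Delta_{\param} \mid S] \le \abs{\E[\Delta_{\param} \mid S]} \le \sqrt{2\pr{y + \E[V_{\param} \mid S]}\pr{\KL(\ps \,||\, p^0) + x + \tfrac{x}{2}\ln\pr{1 + \E[V_{\param} \mid S]/y}}}~.
\]
To express this through the stated proxy \eqref{eq:semi_empirical_V_param}, I would reuse the remove-one stability of \cref{prop:remove_stab} exactly as in the proof of \cref{cor:wm}, giving $V_{\param} \le 2\,V\WIS_{\param}$, so that $\E[V_{\param} \mid S]$ can be re-expressed (up to the constant relating the two proxies) through $\E[V\WIS_{\param} \mid S]$ inside $C_{x,y}(S)$.

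For the bias term, I would apply \cref{lem:bernstein_lower_tail} to the \emph{fresh} weights $W'_{\param,i}$, which for each fixed $\param$ are i.i.d.\ with mean $\E[W'_{\param,1}] = \sum_a \pi_{\param}(a) = 1$; this produces the per-policy bound $g(\param) \le \tfrac{n}{N_{\param,x}(n)}\,v(\pi_{\param})$ exactly as in \cref{thm:off_policy_eval}. Since $v(\pi_{\param}) \in [0,1]$ this rearranges to $g(\param) - v(\pi_{\param}) \le \pr{\tfrac{n}{N_{\param,x}(n)} - 1}v(\pi_{\param}) \le \tfrac{n}{N_{\param,x}(n)} - 1$, and combining $v(\pi_{\param}) \ge g(\param) - \pr{\tfrac{n}{N_{\param,x}(n)}-1}$ with the trivial $v(\pi_{\param}) \ge 0 \ge g(\param)-1$ (using $g(\param)\in[0,1]$) gives
\[
  v(\pi_{\param}) \ge g(\param) - \min\cbr{1,\ \abs{\tfrac{n}{N_{\param,x}(n)} - 1}}~,
\]
where $\tfrac{n}{N_{\param,x}(n)} - 1 \ge 0$ because $N_{\param,x}(n) \le n$. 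Taking $\E[\cdot \mid S]$ and using the concavity of $z \mapsto \min\{1,z\}$ to get $\E[\min\{1,Z\} \mid S] \le \min\{1,\E[Z \mid S]\}$ converts the pointwise bias into exactly $-\min\{1, \E[\,\abs{n/N_{\param,x}(n) - 1}\mid S]\}$. Substituting the concentration and bias bounds into the identity and taking the positive part yields the claim, with the $1-2e^{-x}$ confidence arising from a union bound over the PAC-Bayesian event and the weight-sum lower bound of \cref{lem:bernstein_lower_tail}, just as in \cref{thm:off_policy_eval}; the hypothesis $N_x(n) > 0$ keeps the denominators well defined.

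The main obstacle I anticipate is this bias conversion: the clean per-policy inequality $g(\param) \le \tfrac{n}{N_{\param,x}(n)}\,v(\pi_{\param})$ must survive averaging over the \emph{data-dependent} posterior $\ps$, and the observation that makes this work is that both $g(\param)$ and $v(\pi_{\param})$ are confined to $[0,1]$, which lets me trade the multiplicative bias for an additive one and then pull it through the concave truncation $\min\{1,\cdot\}$ by Jensen. A secondary care point is the bookkeeping of the two failure probabilities together with the factor relating the Efron–Stein proxy $V_{\param}$ to the stated $V\WIS_{\param}$; everything else is the routine algebra already carried out for the fixed-policy evaluation bound.
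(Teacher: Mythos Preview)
Your proposal is correct and follows essentially the same route as the paper's proof: the identical bias/concentration decomposition, the PAC-Bayesian concentration via \eqref{eq:pac_bayes_mix_4} applied to the canonical family $(\Delta_\param,\sqrt{V_\param})$, and the per-policy bias bound via \cref{lem:bernstein_lower_tail} on the fresh weights $W'_{\param,i}$. The only cosmetic differences are that you pass to the $\min\{1,\cdot\}$ bias term by taking the pointwise minimum first and then invoking Jensen, whereas the paper bounds each branch after averaging (which avoids Jensen altogether), and that you make explicit the stability step $V_\param \le 2\,V\WIS_\param$ from \cref{prop:remove_stab}/\cref{cor:wm}, which the paper leaves implicit in its appeal to \cref{thm:pac_bayes_self_normalized_concentration}.
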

\begin{proof}
Throughout the proof we use notation $\wh{v}\WIS_{\param}(S)$ to indicate that the estimator is evaluated on $S$.
We start from decomposition
\begin{align*}
  \E[v(\pi_{\param}) \,|\, S] - \E[\wh{v}\WIS_{\param}(S) \,|\, S]
  = \underbrace{\E[v(\pi_{\param}) \,|\, S] - \E[\wh{v}\WIS_{\param}(S') \,|\, S]}_{\text{Bias}}
  + \underbrace{\E[\wh{v}\WIS_{\param}(S') \,|\, S] - \E[\wh{v}\WIS_{\param}(S) \,|\, S]}_{\text{Concentration}}
\end{align*}
and first handle the bias term.
Since elements of $W_{\param,i}'$, $i \in [n]$ are independent from each other, \cref{eq:b_n} gives us that
\[
  \sum_{i=1}^n W_{\param, i}' \geq N_{\param,x}(n) \qquad \text{with probability at least } 1-e^{-x} \text{ for any } x \geq 0
\]
and thus
\begin{align*}
  \E[\wh{v}\WIS_{\param}(S') \,|\, S]
  &=
    \E\br{ \frac{\sum_{i=1}^n W_{\param, i}' R_i'}{\sum_{i=1}^n W_{\param, i}'} \,\middle|\, S}\\
  &\leq
    \E\br{ \frac{1}{N_{\param, x}(n)} \sum_{i=1}^n W_{\param, i}' R_i' \,\middle|\, S}\\
  &=
    \E\br{ \frac{n}{N_{\param, x}(n)} \E\br{W_{\param, 1}' R_1' \,|\, \param} \,\middle|\, S} \tag{$(A'_i, R'_i) \ i \in [n]$ distributed identically}\\
  &=
    \E\br{ \frac{n}{N_{\param, x}(n)} v(\pi_{\param}) \,\middle|\, S}~.
\end{align*}
Observing that rewards are bounded by $1$, we have a minimum of two bounds
\begin{align*}
  \E[\wh{v}\WIS_{\param}(S') \,|\, S] \leq \min\cbr{1, \E\br{ \frac{n}{N_{\param, x}(n)} v(\pi_{\param}) \,\middle|\, S}}~.
\end{align*}
This shows that the bias is bounded as
\begin{align*}
  \E[\wh{v}\WIS_{\param}(S') \,|\, S] - \E[v(\pi_{\param}) \,|\, S]
  &\leq
    \min\cbr{1 - \E[v(\pi_{\param}) \,|\, S], \E\br{ \pr{\frac{n}{N_{\param, x}(n)} - 1} v(\pi_{\param}) \,\middle|\, S}}\\
  &\leq
    \min\cbr{1, \E\br{ \abs{\frac{n}{N_{\param, x}(n)} - 1} \,\middle|\, S}}
\end{align*}
The concentration term then follows from~\cref{thm:pac_bayes_self_normalized_concentration} where $V_{\param}$ is a semi-empirical Efron-Stein variance proxy, that is
\begin{align*}
  \E[\wh{v}\WIS_{\param}(S) \,|\, S] - \E[\wh{v}\WIS_{\param}(S') \,|\, S]
  \leq
  \sqrt{
  2 \pr{y + \E[V\WIS_{\param} \,|\, S]}
  C_{x,y}(S)
  }~.
\end{align*}
This completes the proof.
\end{proof}

\section{Other Proofs}
\paragraph{Proof of Lemma~\ref{lem:bernstein_lower_tail} (see also~\citep{maurer2003bound}).}
Chernoff bound readily gives a bound on the lower tail
  \begin{align*}
    \P\pr{\sum_{i=1}^n X_i \leq t} \leq \inf_{\lambda > 0} e^{\lambda t} \E\br{e^{- \lambda \sum_{i=1}^n X_i}}~.
  \end{align*}
  By independence of $X_i$
  \begin{align*}
    \prod_{i=1}^n \E\br{e^{- \lambda X_i}}
    &\leq
    \prod_{i=1}^n \pr{1 - \lambda \E\br{X_i} + \frac{\lambda^2}{2} \E\br{X_i^2} } \tag{$e^{-x} \leq 1 - x + \frac{1}{2} x^2$ for $x \geq 0$}\\
    &\leq
    e^{- \lambda n \E\br{X_1} + \frac{\lambda^2 n}{2} \E\br{X_1^2}} \tag{$1+x \leq e^x$ for $x \in \reals$ and i.i.d.\ assumption}
  \end{align*}
  %
  Getting back to the Chernoff bound gives,
  \begin{align*}
    \lambda = \max\cbr{\frac{n \E\br{X_1} - t}{n \E\br{X_1^2}}, 0}~.
  \end{align*}  
  This proves the first result.
  The second result comes by inverting the bound and solving a quadratic equation.

\end{document}